\documentclass{article}

\usepackage{microtype}
\usepackage{graphicx}
\usepackage{subcaption}
\usepackage[normalem]{ulem}
\useunder{\uline}{\ul}{}
\usepackage{booktabs} 

\usepackage{hyperref}

\usepackage[accepted]{icml2026}

\usepackage{amsmath}
\usepackage{amssymb}
\usepackage{mathtools}
\usepackage{amsthm}
\usepackage{bbm}
\usepackage{algorithm}
\usepackage{algorithmic}

\usepackage[capitalize,noabbrev]{cleveref}

\crefname{equation}{Eq.}{Eqs.}
\crefname{figure}{Fig.}{Figs.}
\crefname{table}{Tab.}{Tabs.}
\crefname{appendix}{App.}{Apps.}
\crefname{proposition}{Prop.}{Props.}
\crefname{section}{Sec.}{Secs.}
\crefname{algorithm}{Alg.}{Algs.}

\theoremstyle{plain}
\usepackage[inline]{enumitem}
\newtheorem{theorem}{Theorem}[section]
\newtheorem{proposition}[theorem]{Proposition}

\theoremstyle{definition}

\theoremstyle{remark}
\usepackage{amsmath,bm}
\usepackage{multirow}

\newcommand{\vx}{\ensuremath{\bm{x}}}
\newcommand{\vy}{\ensuremath{\bm{y}}}

\usepackage[textsize=tiny]{todonotes}

\icmltitlerunning{Flow-Based Density Ratio Estimation for Intractable Distributions with Applications in Genomics}

\begin{document}

\twocolumn[
\icmltitle{Flow-Based Density Ratio Estimation for Intractable Distributions \\ with Applications in Genomics}

  \icmlsetsymbol{equal}{*}

  \begin{icmlauthorlist}
    \icmlauthor{Egor Antipov}{equal,helmholtz,cit,mcml}
    \icmlauthor{Alessandro Palma}{equal,helmholtz,cit,mcml}
    \icmlauthor{Lorenzo Consoli}{equal,helmholtz}
    \icmlauthor{Stephan Günnemann}{cit,mcml,mdsi}\\
    \icmlauthor{Andrea Dittadi}{helmholtz,cit,mcml}
    \icmlauthor{Fabian J. Theis}{helmholtz,cit,mcml,mdsi,ls_tum}
  \end{icmlauthorlist}

\icmlaffiliation{helmholtz}{Institute of Computational Biology, Helmholtz Munich}
\icmlaffiliation{cit}{School of Computation Information and Technology, Technical University of Munich}
\icmlaffiliation{mdsi}{Munich Data Science Institute (MDSI)}
\icmlaffiliation{ls_tum}{TUM School of Life Sciences Weihenstephan, Technical University of Munich}
\icmlaffiliation{mcml}{Munich Center for Machine Learning (MCML)}

\icmlcorrespondingauthor{Fabian J. Theis}{fabian.theis@helmholtz-munich.de}
\icmlcorrespondingauthor{Andrea Dittadi}{andrea.dittadi@gmail.com}

  \icmlkeywords{Machine Learning, ICML}

  \vskip 0.3in
]

\printAffiliationsAndNotice{\icmlEqualContribution}

\begin{abstract}
Estimating density ratios between pairs of intractable data distributions is a core problem in probabilistic modeling, enabling principled comparisons of sample likelihoods under different data-generating processes across conditions. While exact-likelihood models such as normalizing flows offer a promising approach to density ratio estimation, naive evaluations are computationally expensive and prone to discretization errors because they require simulating each distribution's likelihood independently. In this work, we leverage condition-aware flow matching to derive a single dynamical formulation for tracking density ratios along generative trajectories. We demonstrate competitive performance on simulated benchmarks for closed-form ratio estimation, and show that our method supports versatile tasks in single-cell genomics data analysis, where likelihood-based comparisons of cellular states across experimental conditions enable treatment effect estimation and batch correction evaluation.

\end{abstract}

\section{Introduction}
Estimating the density of high-dimensional data has been a central research objective in both theoretical \citep{silverman2018density, chencontinuous} and applied \citep{cranmer2020frontier, noe2019boltzmann} settings. The ability to identify modes and evaluate likelihoods in complex feature spaces enables a deeper understanding of the underlying data-generating process, laying the foundation for questions surrounding uncertainty quantification \citep{kuleshov2022calibrated, charpentier2020posterior} and the principled comparison of probabilistic models \citep{gutmann2012noise, rhodes2020telescoping}. In this context, generative models have emerged as a prominent data-driven approach to density estimation, with methods such as Continuous Normalizing Flows (CNFs) \citep{chen_neural} learning expressive approximations of data distributions and enabling the exact evaluation of likelihoods for individual observations.

A notable application of density estimation concerns the study of \textit{likelihood ratios}, in which the likelihoods of a data point under two alternative probabilistic models are compared as a ratio \citep{Sugiyama2010DensityRE}. While this strategy is both simple and effective, evaluating sample likelihoods separately across different models poses computational challenges, particularly when likelihood approximations involve integral expressions and introduce discretization errors, as in exact likelihood models such as CNFs.

In this work, we introduce a method to efficiently evaluate the likelihood ratios of data points across different conditional models, using CNFs trained with flow matching \citep{liu2022flow, albergo2023building, lipman2023flow}. Rather than independently estimating likelihoods and forming their ratio, we derive a single dynamical formulation to track the ratio along generative trajectories from data to noise. Our method only requires standard CNF models and estimates likelihood ratios at inference time via the compositional combination of vector fields and density scores. We show improved performance on data-driven synthetic likelihood ratio estimation, while use cases can be extended to other fields requiring density comparisons, such as hypothesis testing and anomaly detection.

Moreover, we explore real-world applications of our likelihood ratio estimation method to high-dimensional cellular data, in which the gene expression of individual cells is measured by single-cell RNA sequencing (scRNA-seq) across heterogeneous conditions \citep{jovic2022single}. Our model enables principled, likelihood-based comparison of biological states arising from distinct experimental designs. Motivated by applications to cellular biology, we call the tool grounded in our framework \textbf{s}ingle-\textbf{c}ell \textbf{R}atio (scRatio). 

We summarize our main contributions as follows:
\begin{itemize}[topsep=0pt,itemsep=0pt,partopsep=0pt,parsep=0.0cm,leftmargin=5pt]
\item We derive a dynamical formulation to estimate density ratios between CNF models using a single simulation.
\item We introduce an inference procedure combining vector fields and score functions learned via flow matching to estimate likelihood ratios at individual data points.
\item We demonstrate competitive performance on synthetic closed-form ratio benchmarks across multiple dimensions.
\item We showcase the practical utility of our model, scRatio, on genomics data, supporting multiple conditional comparison tasks such as the analysis of perturbation data and the evaluation of batch correction algorithms.
\end{itemize}
\section{Related Works}
\textbf{Likelihood estimation with generative models.} Our approach is related to likelihood approximation frameworks based on CNFs \citep{chen_neural,tong2024improving,jing2022torsional} and diffusion models \citep{song2020score, karczewski2025diffusion}. We further build on ideas from steering the dynamics of generative models, as commonly done in guidance methods \citep{ho2021classifierfree, zheng2023guided, bansal2023universal} and in compositional generation \citep{liu2022compositional, skreta2025feynman}. In this context, recent work on compositional diffusion \citep{, skreta2025feynman} introduces strategies for computing the density of a model along trajectories simulated by another one, while a related formulation for density guidance is presented in \citet{karczewski2025devil}. We draw inspiration from these approaches to formalize our simulation-based likelihood ratio estimation.

\textbf{Density ratio estimation.} Traditional methods for density ratio estimation include moment matching, probabilistic classification, and direct ratio matching, as reviewed in \citet{Sugiyama2010DensityRE}, while recent work has revisited this problem. \citet{rhodes2020telescoping} propose a strategy that combines probability paths between distributions with classification-based objectives to estimate density ratios. Other methods exploit Time Score Matching (TSM) \citep{choi2022density,chen2026a}, which expresses the $\log$ density ratio as the integral of the time-derivative of intermediate densities along an interpolation path. In this setting, authors in \citet{yu2025density} introduce a diffusion-based extension of TSM with a closed-form objective on data-conditioned probability paths. In contrast, our model avoids interpolating between numerator and denominator densities and instead leverages a single conditional CNF to track density ratios dynamically, without retraining for each pair of compared distributions.

\textbf{Likelihood comparisons for scRNA-seq.} Density estimation with Gaussian processes and mixture models has been used to identify prioritized cellular states during differentiation processes \citep{otto2024quantifying} and compare samples of cells as distributions \cite{wang2024visualizing}. Other methods focus on differential abundance analysis, assessing whether a cell state is enriched according to different conditional distributions approximated with k-Nearest-Neighbors (kNN) algorithms \citep{dann2022differential, burkhardt2021quantifying} or Variational Autoencoders (VAE) \citep{boyeau2025deep}. However, none of the mentioned studies relies on flexible, parameterized exact-likelihood estimation as in scRatio.
\section{Background}
\begin{figure*}
    \centering
    \includegraphics[width=1\linewidth]{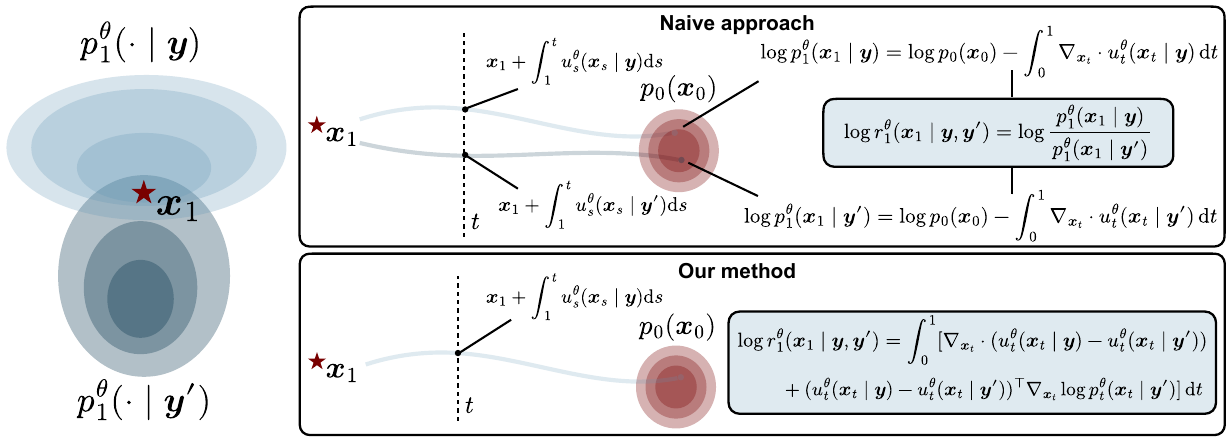}
    \caption{Naively approximating the likelihood ratio $q(\vx_1 \mid \vy)/q(\vx_1 \mid \vy')$ requires estimating the likelihood of $\vx_1$ separately under two conditional generative models $p_t^\theta(\cdot \mid \vy)$ and $p_t^\theta(\cdot \mid \vy')$, incurring two independent integral evaluations with compounding discretization errors. scRatio instead estimates the log-ratio directly as the solution of a single ODE tracked along trajectories simulated from data to noise under a vector field $b_t$. Here, we illustrate the case where $b_t := u_t^{\theta}(\cdot \mid \vy)$, i.e., trajectories are simulated under the numerator field.}
    \label{fig: concept_figure}
\end{figure*}

\subsection{Continuous Normalizing Flows}\label{sec: cnfs}
\textbf{Unconditional CNFs.} We consider a generative model over $\mathbb{R}^d$ as a time-resolved probability density $p_t : \mathbb{R}^d \rightarrow \mathbb{R}_{>0}$, with $t \in [0,1]$ and $\int p_t(\vx)\,\mathrm{d}\vx = 1$, such that $p_0$ is a standard normal distribution and $p_1$ approximates a complex data distribution $q$ from which we wish to sample. We use $\vx_t \in \mathbb{R}^d$ to denote a sample from the time-marginal distribution with density $p_t$. CNFs learn a time-dependent velocity field $u_t : \mathbb{R}^d \rightarrow \mathbb{R}^d$ generating the family $\{p_t\}_{t \in [0,1]}$ with partial time-derivative governed by the continuity equation:
\begin{equation}\label{eq: continuity_equation}
\frac{\partial}{\partial t} p_t(\vx)
=
- \nabla_{\vx} \cdot \bigl(p_t(\vx)\, u_t(\vx)\bigr) \, .
\end{equation}
Specifically, $u_t$ defines the velocity of the following Ordinary Differential Equation (ODE):
\begin{equation}\label{eq: flow_ode}
\frac{\mathrm{d}}{\mathrm{d}t} \vx_t
=
u_t(\vx_t), \qquad \vx_0 \sim \mathcal{N}(\mathbf{0}_d, \mathbb{I}_d) \, ,
\end{equation}
where $\mathbf{0}_d$ and $\mathbb{I}_d$ are the $d$-dimensional zero-vector and identity matrix. The solution to \cref{eq: flow_ode} transports samples from the prior $p_0$ to the approximate data distribution $p_1$.

CNFs enable exact likelihood computation on a data sample $\vx_1$ and can be learned via maximum likelihood optimization. Using \cref{eq: continuity_equation} and expanding the total time derivative $\frac{\mathrm{d}}{\mathrm{d}t}\log p_t(\vx_t)$, \citet{chen_neural} derived an exact expression for the $\log$-density of the data under the model at $t=1$:
\begin{equation}\label{eq: change_of_variable}
\log p_1(\vx_1)
=
\log p_0(\vx_0)
-
\int_{0}^{1} \nabla_{\vx_t} \cdot u_t(\vx_t)\,\mathrm{d}t \, ,
\end{equation}
where $\vx_0$ and $\vx_t$ are defined as in \cref{eq: flow_ode}, and $p_0$ is a tractable standard normal density. As a result, a CNF can be trained by learning a parameterized vector field $u_t^{\theta}$ that generates the probability path $p_t^{\theta}$ maximizing the likelihood of the data. During inference, the likelihood of a data point $\vx_1\sim q(\vx_1)$ under the learned model can be computed by integrating \cref{eq: change_of_variable} with samples obtained by solving the ODE in \cref{eq: flow_ode} backwards in time.

\textbf{Condition-Aware CNFs.} Given a conditioning random variable $Y \in \mathbb{R}^k$, one can extend CNFs to model conditional likelihoods by learning a parameterized family of distributions $p_t^{\theta}(\vx_t \mid \vy)$, generated by a conditional vector field $u_t^{\theta}(\vx_t \mid \vy)$, for each realization $\vy$ of $Y$. This enables comparing how likely a data point is under different conditions by evaluating \cref{eq: change_of_variable} with distinct conditional fields.

\subsection{Flow Matching}\label{sec:flow_matching}
Flow matching \citep{lipman2023flow, liu2022flow, albergo2023building} is a simulation-free approach to optimize CNFs, avoiding computationally expensive operations like \cref{eq: change_of_variable}. Given real data samples $\vx_1\sim q(\vx_1)$, the model expresses the time-marginal generating field $u_t(\vx_t)$ presented in \cref{eq: continuity_equation} in terms of a data-conditioned field $u_t(\vx_t \mid \vx_1)$ as follows:
$$
u_t(\vx_t) := \int u_t(\vx_t \mid \vx_1)\frac{p_t(\vx_t \mid \vx_1)q(\vx_1)}{p_t(\vx_t)}\mathrm{d}\vx_1 \, .
$$
Crucially, regressing $u_t^{\theta}(\vx_t)$ against $u_t(\vx_t \mid \vx_1)$ is equivalent, up to a constant term, to learning $u_t(\vx_t)$, which is used for generation from noise according to \cref{eq: flow_ode}.

The conditional field $u_t(\vx_t \mid \vx_1)$ generates a conditional probability path $p_t(\cdot \mid \vx_1)$ interpolating between the noise distribution at $t=0$, $p_0(\cdot \mid \vx_1)=\mathcal{N}(\boldsymbol{0}_d,\mathbb{I}_d)$, and a distribution at $t=1$ concentrated around the data point $\vx_1$, which can be approximated in practice by a tractable distribution $\tilde q(\cdot \mid \vx_1)$ such as a narrow Gaussian or a point mass. Given a closed-form path from noise to data, the target field $u_t(\vx_t \mid \vx_1)$ is tractable. Here, we consider Gaussian paths:
\begin{equation}
    p_t(\cdot \mid \vx_1) := \mathcal{N}(\alpha_t \vx_1, \sigma_t^2\mathbb{I}_d)
\label{eq: gaussian_path}
\end{equation}
where $\alpha_t$ and  $\sigma_t$ define a scheduler for the transition between noise and data. Throughout this work, we fix $\alpha_t=t$ and explore three choices of variance used in the literature: (I) $\sigma_t=1-t$ \citep{tong2024improving}, (II) $\sigma_t=1-(1-\sigma_{\mathrm{min}})t$ \citep{lipman2023flow}, and (III) $\sigma_t=\left[(1-\lambda)t^2+(\lambda-2)t +1\right]^{\frac{1}{2}}$ \cite{albergo2023building, tong2024simulation}, where $\sigma_{\mathrm{min}}$ and $\lambda$ are hyperparameters controlling the magnitude of noise around the data and the shape of the probability path, respectively (see \cref{sec: estimation_of_score} and \cref{sec: explanation_schedules} for more details).\looseness=-1

Using Gaussian paths, one can derive a closed-form expression for the conditional field $u_t(\cdot \mid \vx_1)$ used as a regression target during training (see \cref{sec: conditional_field_app} for a full derivation and \cref{tab: score_flow_closed_form} for its explicit form under different schedulers).

\subsection{Application: scRNA-seq Data}
Density estimation via generative models is naturally applicable to single-cell biology, where each observation $\vx_1 \in \mathbb{R}^d$ is a $d$-dimensional representation of the cellular state, measured under specific experimental conditions. Given a treatment label or biological covariate $\vy$ associated with each sample, conditional CNFs modeling $\{p_t^{\theta}(\cdot \mid \vy)\}_{t \in [0,1]}$ can be used to approximate cellular distributions across biological contexts \citep{klein2025cellflow}. Modeling such context-specific densities provides a principled way to assess distributional shifts across experimental conditions, supporting the analysis of perturbation assays.
 
\section{Flow-Based Likelihood Ratio Estimation}
\subsection{Problem Statement}
Let $\vx_1\in\mathbb{R}^d$ be a real data sample and $\vy, \vy'\in\mathbb{R}^k$ realizations of a conditioning random variable $Y$ such that $\vy \neq \vy'$. Furthermore, consider $\{p_t^{\theta}(\cdot \mid Y)\}_{t\in[0,1]}$ as a conditional CNF model, such that $p_1^{\theta}(\cdot \, | \, Y) \approx q(\cdot \, | \, Y)$, where $q(\cdot \, | \, Y)$ is the true conditional data density, and with parameterized vector fields $\{u_t^{\theta}(\cdot \mid Y)\}_{t\in[0,1]}$. Our goal is to estimate the ratio between conditional likelihoods, defined as:
\begin{equation}\label{eq: ratio_formulation}
    r^{\theta}(\vx_1 \mid \vy, \vy') := \frac{p_1^{\theta}(\vx_1 \mid \vy)}{p_1^{\theta}(\vx_1 \mid \vy')} \: .
\end{equation}
In other words, we seek to score each data point based on whether it is more likely under the parameterized distribution conditioned on $\vy$ or $\vy'$. While we focus on conditional distributions, the same formulation can be applied to a pair of unconditional generative models on $\mathbb{R}^d$.

A straightforward approach to address this goal is to compute both the numerator and denominator in \cref{eq: ratio_formulation} by solving the ODE in \cref{eq: change_of_variable} using different conditional fields and deriving the ratio between the outcomes. We refer to this practice as the \textit{naive estimation} of $r^{\theta}(\vx_1 \mid \vy, \vy')$. However, each evaluation of the integral in \cref{eq: change_of_variable} is costly, which undermines the efficiency of the naive approach as the number of data points and features increases. Moreover, each separate simulation-based estimation involves non-linear error compounding due to discretization across time.

\subsection{Simulation-Based Likelihood Ratio Estimation}\label{sec: simulation_based_lik_ratio}
To address the computational cost and compounding errors of the naive estimation, we propose estimating the likelihood ratio in \cref{eq: ratio_formulation} via the simulation of a \textit{single ODE}. By eliminating the additional integral evaluations required by the naive method, our approach efficiently scores individual data points while maintaining stable performance. Specifically, we derive a dynamical formulation of the ratio $r_t^{\theta}$ for $t \in [0,1]$, such that $r_1^{\theta} = r^{\theta}$, with $r^{\theta}$ as defined in \cref{eq: ratio_formulation}.

For CNFs, the vector fields $u^{\theta}_t(\cdot \mid \vy)$ and $u^{\theta}_t(\cdot \mid \vy')$ generate the corresponding conditional densities under the continuity equations in \cref{eq: continuity_equation}, sharing the same tractable prior:
\begin{align}
\quad p_0(\cdot \mid \vy) = p_0(\cdot \mid \vy') = \mathcal{N}( \mathbf{0}_d, \mathbb{I}_d) \, .
\end{align}
We then establish the following dynamical formulation of the $\log$-ratio of densities in \cref{eq: ratio_formulation}: 
\begin{align}
    & \log r^{\theta}_t(\vx_t \mid \vy, \vy') := \log \frac{p_t^{\theta}(\vx_t \mid \vy)}{p_t^{\theta}(\vx_t \mid \vy')} \, , \label{eq: dynamical_ratio} \\ 
    & \mathrm{with} \: \log r^{\theta}_0(\vx_0 \mid \vy, \vy') = 0 \, .
\end{align}
We seek to derive an integral equation to estimate $\log r_1^\theta(\vx_1 \mid \vy, \vy')$ akin to the change of variable formula in \cref{eq: change_of_variable}. However, while each conditional density satisfies a continuity equation, their ratio does not form a proper density function over $\mathbb{R}^d$, and thus cannot be sampled.

Inspired by work on composing diffusion models \citep{skreta2025the}, we propose tracking the $\log$-ratio between densities in \cref{eq: dynamical_ratio} along interpolation trajectories between the noise and the data under an arbitrary velocity model. We formalize our solution in the following proposition.

\begin{proposition}\label{prop:log-ratio-ode}
Let $\vx_t$ be the solution of $\mathrm{d}\vx_t/\mathrm{d}t = b_t(\vx_t)$, with $t\in[0,1]$. Moreover, let $\{p_t\}_{t\in[0,1]}$ and $\{p'_t\}_{t\in[0,1]}$ be two probability paths generated by vector fields $\{u_t\}_{t\in[0,1]}$ and $\{u'_t\}_{t\in[0,1]}$, each satisfying the continuity equation in \cref{eq: continuity_equation}, with $p_0 = p'_0$. Assume that $p_t$ and $p'_t$ share the same support for all $t\in[0,1]$ and define the time-dependent density ratio $r_t(\cdot) := p_t(\cdot)/p'_t(\cdot)$. Then the logarithm of the ratio evaluated along the trajectory $\vx_t$ satisfies:
\begin{align}
 \label{eq: dynamical_ratio_prop}
    \frac{\mathrm{d}}{\mathrm{d}t} \log r_t(\vx_t)
    &= \nabla_{\vx_t} \cdot \bigl(u'_t(\vx_t) - u_t(\vx_t)\bigr)  \\
    &\quad + \bigl(b_t(\vx_t) - u_t(\vx_t)\bigr)^\top \nabla_{\vx_t} \log p_t(\vx_t) \notag \\
    &\quad + \bigl(u'_t(\vx_t) - b_t(\vx_t)\bigr)^\top \nabla_{\vx_t} \log p'_t(\vx_t), \notag
\end{align}
with $\log r_0(\vx_0) = 0$.
\end{proposition}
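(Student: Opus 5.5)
The plan is to compute the total time derivative of each log-density along the trajectory $\vx_t$ driven by $b_t$, and then subtract the two expressions. Since $p_t$ and $p'_t$ share support and are positive along the trajectory, both logarithms are well defined. We assume enough regularity for the chain rule: $p_t,p'_t$ are $C^1$ in $(t,\vx)$, and $u_t,u'_t,b_t$ are $C^1$ in $\vx$.

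First, we rewrite the continuity equation in \cref{eq: continuity_equation} in log form. Expanding the divergence of the product and dividing by $p_t>0$ gives
\begin{align*}
\partial_t \log p_t(\vx) = -\nabla_{\vx}\cdot u_t(\vx) - u_t(\vx)^\top \nabla_{\vx}\log p_t(\vx),
\end{align*}
and the identical identity holds for $p'_t$ with $u'_t$.

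Second, we apply the chain rule along $\vx_t$, using $\mathrm{d}\vx_t/\mathrm{d}t=b_t(\vx_t)$:
\begin{align*}
\frac{\mathrm{d}}{\mathrm{d}t}\log p_t(\vx_t) = \partial_t\log p_t(\vx_t) + b_t(\vx_t)^\top\nabla_{\vx_t}\log p_t(\vx_t).
\end{align*}
Substituting the log form of the continuity equation yields
\begin{align*}
\frac{\mathrm{d}}{\mathrm{d}t}\log p_t(\vx_t) = -\nabla\cdot u_t + (b_t-u_t)^\top\nabla\log p_t,
\end{align*}
with everything evaluated at $\vx_t$. The analogous expression holds for $p'_t$.

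Third, we subtract the primed expression from the unprimed one, using $\log r_t=\log p_t-\log p'_t$. The divergence terms combine into $\nabla\cdot(u'_t-u_t)$. The unprimed score term is $(b_t-u_t)^\top\nabla\log p_t$. The primed score term enters with a minus sign, so $-(b_t-u'_t)^\top\nabla\log p'_t=(u'_t-b_t)^\top\nabla\log p'_t$. Together these reproduce \cref{eq: dynamical_ratio_prop}.

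Finally, the initial condition $\log r_0(\vx_0)=0$ follows directly from $p_0=p'_0$.

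There is no serious obstacle here. The only delicate points are the regularity and positivity needed to divide by $p_t$ and to apply the chain rule, which is exactly why the statement assumes shared support. We handle them by stating the smoothness assumptions above and noting that positivity holds on the common support along the trajectory.
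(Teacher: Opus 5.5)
Your proof is correct and takes essentially the same route as the paper: compute the total derivative of each log-density along the $b_t$-trajectory via the chain rule, substitute the continuity equation with $\nabla\cdot(p_t u_t)=\nabla p_t^\top u_t + p_t\,\nabla\cdot u_t$, and subtract the primed from the unprimed expression. The only cosmetic difference is that you rewrite the continuity equation in log form before applying the chain rule, whereas the paper substitutes it into $\frac{1}{p_t}\frac{\mathrm{d}}{\mathrm{d}t}p_t(\cdot)$ and divides by $p_t$ afterwards.
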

%

The derivation is in \cref{sec: proof_prop_log_ratio_ode}. In summary, the ratio between two likelihoods at a data point $\vx_1$ can be estimated by integrating \cref{eq: dynamical_ratio_prop} along the trajectory $\vx_t=\vx_1+\int_1^t b_s(\vx_s)\mathrm{d}s$, where $b_t(\vx_t)$ is a time-conditional marginal field. Considering our proposed setting with conditional densities approximated by generative models as in \cref{eq: dynamical_ratio}, we suggest two natural examples of the simulated field $b_t$:
\begin{itemize}
[topsep=0pt,itemsep=0pt,partopsep=0pt,parsep=0.1cm,leftmargin=10pt]
    \item \textbf{S1}. $b_t$ as the numerator field: With $b_t(\cdot):=u_t(\cdot):=u^{\theta}_t(\cdot \mid \vy)$ and $u'_t(\cdot):=u^{\theta}_t(\cdot \mid \vy')$.
    \item \textbf{S2}. $b_t$ as the unconditional field: With $b_t(\cdot):=u_t^{\theta}(\cdot)$, $u_t(\cdot):=u^{\theta}_t(\cdot \mid \vy)$ and $u'_t(\cdot):=u^{\theta}_t(\cdot \mid \vy')$.
\end{itemize}
Note that both cases can be obtained by a single generative model, simultaneously trained conditionally and unconditionally as in \citet{ho2021classifierfree}.

In \cref{fig: concept_figure}, we use S1 to visualize how our method compares with its naive counterpart, while we refer to \cref{sec: integral_solution_ratio} for the explicit integral solution of \cref{eq: dynamical_ratio_prop}.

\subsection{Choice of the Simulation Field $b_t$}
When $b_t$ is itself a generative field with associated time-marginal density $p^b_t$, the following bound on the cumulative squared variation in time of the log-ratio holds.

\begin{proposition}\label{prop: lower_bound_kl}
Let $r_t$, $p_t$ and $p_t'$ be defined as in \cref{prop:log-ratio-ode}, with $p_0 = p_0'$. Moreover, let $b_t$ be a generative field for a distribution $p_t^b$, with $t\in[0,1]$, according to \cref{eq: continuity_equation}. Define
\begin{equation}
\Delta \mathrm{KL}(t) := \mathrm{KL}(p^b_t \| p'_t) - \mathrm{KL}(p^b_t \| p_t).
\end{equation}
Note that $\Delta\mathrm{KL}(0) = 0$, which follows from $p_0 = p'_0$. Under regularity conditions allowing the exchange of derivative and expectation, the cumulative squared rate of change of the log-ratio along trajectories $\vx_t \sim p^b_t$ satisfies
\begin{equation}
\int_0^1 \mathbb{E}_{p^b_t} \left[ \left( \frac{\mathrm{d}}{\mathrm{d}t} \log r_t(\vx_t) \right)^2 \right] \mathrm{d}t 
\;\geq\; \big( \Delta \mathrm{KL}(1) \big)^2,
\end{equation}
where $\frac{\mathrm{d}}{\mathrm{d}t} \log r_t(\vx_t)$ denotes the total time derivative of the log-ratio evaluated along $\vx_t$.
\end{proposition}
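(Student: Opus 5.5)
The plan is to show that the time derivative of $\Delta\mathrm{KL}(t)$ equals the expectation, under $p^b_t$, of the total derivative of the log-ratio along $\vx_t$. The bound then follows from Jensen and Cauchy--Schwarz. First observe that
\begin{equation*}
\Delta\mathrm{KL}(t) = \mathbb{E}_{p^b_t}\bigl[\log p^b_t - \log p'_t\bigr] - \mathbb{E}_{p^b_t}\bigl[\log p^b_t - \log p_t\bigr] = \mathbb{E}_{p^b_t}\bigl[\log r_t(\vx_t)\bigr],
\end{equation*}
because the entropy-type terms $\mathbb{E}_{p^b_t}[\log p^b_t]$ cancel. This removes any need to differentiate $p^b_t$ inside a logarithm.

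Next, write $\vx_t = \phi_t(\vx_0)$ with $\vx_0\sim p^b_0$, where $\phi_t$ is the flow of $b_t$. Since $b_t$ generates $p^b_t$ through the continuity equation \cref{eq: continuity_equation}, the push-forward of $p^b_0$ under $\phi_t$ is $p^b_t$. Hence
\begin{equation*}
\Delta\mathrm{KL}(t)=\mathbb{E}_{\vx_0\sim p^b_0}\bigl[\log r_t(\phi_t(\vx_0))\bigr].
\end{equation*}
Under the assumed regularity, differentiating under the expectation gives
\begin{equation*}
\frac{\mathrm{d}}{\mathrm{d}t}\Delta\mathrm{KL}(t) = \mathbb{E}_{p^b_t}\Bigl[\frac{\mathrm{d}}{\mathrm{d}t}\log r_t(\vx_t)\Bigr],
\end{equation*}
where the integrand is exactly the total derivative computed in \cref{prop:log-ratio-ode}. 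One can equivalently use the Eulerian form $\partial_t\log r_t + b_t^\top\nabla\log r_t$ and integrate the continuity equation by parts; both routes give the same identity.

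Now integrate from $0$ to $1$, using $\Delta\mathrm{KL}(0)=0$ (from $p_0=p'_0$, so $\log r_0\equiv 0$):
\begin{equation*}
\Delta\mathrm{KL}(1)=\int_0^1 \mathbb{E}_{p^b_t}\Bigl[\frac{\mathrm{d}}{\mathrm{d}t}\log r_t(\vx_t)\Bigr]\mathrm{d}t .
\end{equation*}
Squaring, Cauchy--Schwarz on $[0,1]$ (interval of length one) gives $(\int_0^1 f\,\mathrm{d}t)^2\le\int_0^1 f^2\,\mathrm{d}t$. Jensen then gives $(\mathbb{E}[g])^2\le\mathbb{E}[g^2]$ pointwise in $t$. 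Chaining the two inequalities yields the claim.

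The main obstacle is justifying the exchange of derivative and expectation, together with the push-forward identity linking the Lagrangian trajectory to the marginal $p^b_t$. This needs $b_t$ to be regular enough (e.g.\ locally Lipschitz, with the continuity equation having the flow-map solution) and $\log r_t$ to have integrable derivatives along the flow. The statement explicitly assumes these regularity conditions, so I would invoke them directly rather than prove them. The shared-support assumption from \cref{prop:log-ratio-ode} ensures $\log r_t$ is finite $p^b_t$-a.s., provided $p^b_t$ is supported within the common support.
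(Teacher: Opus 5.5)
Your proposal is correct and follows the paper's proof step for step: the identity $\Delta\mathrm{KL}(t)=\mathbb{E}_{p^b_t}[\log r_t(\vx_t)]$, the exchange of derivative and expectation, pointwise Jensen, and Cauchy--Schwarz on $[0,1]$. You apply the last two in the opposite order, which is immaterial. Your rewriting $\Delta\mathrm{KL}(t)=\mathbb{E}_{\vx_0\sim p^b_0}[\log r_t(\phi_t(\vx_0))]$ via the flow map of $b_t$ is a useful sharpening of the step the paper attributes to ``regularity conditions'': it makes the expectation $t$-independent, which shows why the exchange yields the total derivative along trajectories rather than the partial one.
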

The proof is in \cref{sec: lower_bound_kl}. This result shows that $(\Delta \mathrm{KL}(1))^2$ lower bounds the cumulative squared 
variation of the log-ratio along trajectories sampled from $p^b_t$. Since $\Delta\mathrm{KL}(1)$ 
depends only on the terminal distribution $p^b_1$, minimizing this bound suggests choosing 
$b_t$ such that $p^b_1$ concentrates in regions of high probability mass under both $p_1$ 
and $p'_1$. This overlap region can be approximated from data samples.

\subsection{Score Estimation in \cref{eq: dynamical_ratio_prop}}
From the Gaussian probability path formulation in \cref{eq: gaussian_path}, the density score $\nabla_{\vx_t} \log p_t(\vx_t)$ in \cref{eq: dynamical_ratio_prop} can be directly reparametrized from the generating vector fields $u_t(\vx_t)$ according to the following equation \citep{zheng2023guided}:
\begin{equation}\label{eq: score_flow_relation}
\nabla_{\vx_t}\log p_t(\vx_t) = \frac{\alpha_t u_t(\vx_t)-\dot{\alpha}_t \vx_t}{\sigma_t(\dot{\alpha_t}\sigma_t-\alpha_t\dot{\sigma}_t)} \, .
\end{equation}
Thus, learning either the score or the vector field is \textit{equivalent} up to a reparameterization depending on the schedule. In our conditional setting, \cref{eq: score_flow_relation} allows us to recover
$\nabla_{\vx_t}\log p_t^{\theta}(\vx_t \mid \vy')$ from $u_t^{\theta}(\vx_t \mid \vy')$, enabling direct evaluation of the density-ratio ODE.

However, for many choices of $\alpha_t$ and $\sigma_t$, the denominator in \cref{eq: score_flow_relation} implies a division by a low number at $t=1$ (see \cref{tab: score_flow_closed_form_marginal}), causing numerical instability close to the data. \looseness=-1

Thus, we choose to parameterize the score with a neural network $s_t^\psi(\vx_t \mid \vy')\approx \nabla_{\vx_t} \log p_t(\vx_t \mid \vy')$ trained to regress the conditional score, which is tractably available for Gaussian probability paths (see \cref{tab: score_flow_closed_form} for the closed form conditional score objectives and \cref{sec: estimation_of_score} for theoretical details). This provides a smooth estimator of the score close to the data, enabling a stable approximation of the $\log$-density ratio in \cref{prop:log-ratio-ode}. We provide a first-order error analysis of our neural approximation in \cref{sec: error_analysis}. 

\subsection{Use Cases in Single-cell Data Analysis}\label{sec: applications}
Many tasks in computational biology require comparing the likelihood of cellular profiles under alternative probabilistic models. In what follows, we formalize several examples of scRatio use in cellular data analysis.

\textbf{Differential Abundance (DA) analysis.} Consider a dataset $\bm{X}=\{(\vx_{1}^{(i)}, y^{(i)})\}_{i=1}^{N}$  of $N$ single cells, where $\vx_{1}^{(i)} \in \mathbb{R}^d$ denotes a representation of the gene expression profile of cell $i$, and $y^{(i)} \in {0,1}$ is a perturbation label, with $0$ and $1$ indicating control and treatment conditions, respectively. Moreover, let ${p_t^\theta(\cdot \mid Y)}_{t\in[0,1]}$ be a generative model approximating the distribution of cells under each realization of conditioning random variable $Y$, and ${u_t^\theta(\cdot \mid Y)}_{t\in[0,1]}$ its generating velocity field. With scRatio, we tackle a core challenge in single-cell analysis:

\textit{Are cells from a given perturbation more likely under their conditional distribution than under the control distribution?}

Given a perturbed cell $\vx_{1}^{(j)}$, with $j \in \mathcal{I}_1=\{i \mid y^{(i)}=1\}$, we conduct this analysis by estimating the following ratio:
$$
\log r_1^\theta(\vx_{1}^{(j)} | 0,1) = \log \frac{p_1^{\theta}(\vx_{1}^{(j)} | Y=1)}{p_1^{\theta}(\vx_{1}^{(j)} | Y=0)} \, .
$$

using \cref{prop:log-ratio-ode}. Intuitively, when the $\log$ ratio is higher than 0, the cell is more likely under the perturbed distribution than the untreated one. The concept can be naturally extended to settings involving multiple perturbations by training a flow model conditioned on different treatment labels and contrasting each perturbation likelihood with the control likelihood, separately.

\textbf{Combining conditions.} Let $\bm{X}={(\vx_{1}^{(i)}, y^{(i)}, z^{(i)})}_{i=1}^{N}$ now be a dataset consisting of the combination of two conditions: $y^{(i)} \in {1,…,k_y}$ and $z^{(i)}\in {1,…,k_z}$, $\forall i \in {1,…,N}$. In an applied setting, the two conditions can represent combinations of treatments or batch and biological labels. With scRatio, we ask the following:

\textit{Does conditioning on both $y$ and $z$ increase the likelihood of $\vx_1$ compared to conditioning on $y$ only?}

For cells from a certain combination of conditions $y$ and $z$, we can evaluate the following likelihood ratio:
$$
\log r_1^{\theta}(\vx_{1}^{(i)}\mid (y, z), y) = \log \frac{p_1^{\theta}(\vx_{1}^{(i)} \mid y, z)}{p_1^{\theta}(\vx_{1}^{(i)} \mid y)}
$$
Importantly, this does not require training separate models for numerator and denominator, as flow matching allows optimizing simultaneously single- and double-conditional models by alternating $z$ with an empty token $\varnothing$ during training \citep{ho2021classifierfree}.

A potential use case is treatment combination, where one assesses whether multiple perturbations cause additional state shifts ($\log r_1^{\theta}>0$) due to the presence of interaction effects compared to single treatments.  Another possible application of this formulation is batch-correction evaluation. If $y$ is a biological annotation and $z$ is a batch label, the ratio indicates the extent to which technical effects drive the variation in the data ($\log r_t^\theta \approx 0$ for mixed data and $\log r_t^\theta \neq 0$ with batch effect). This information can be used to compare different correction procedures depending on their effect on the ratio’s magnitude.

\section{Experiments}
\begin{figure}
    \centering
\includegraphics[width=0.90\linewidth]{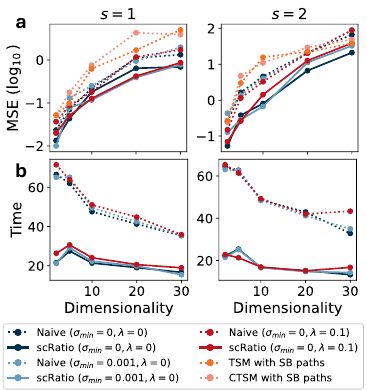}
 \caption{\textbf{a.} The average MSE across 5 training runs between true and estimated likelihood ratios across models and simulation parameters. SB paths stand for Schrödinger Bridge probability paths. \textbf{b.} Runtime in seconds for likelihood ratio estimation comparing scRatio with the naive approach across simulation parameters. Dynamical ratios are simulated with the adaptive \texttt{dopri5} solver.}\label{fig: fig_mse}
\end{figure}

\subsection{Simulated High-Dimensional Gaussians}\label{sec: simulated_gaussians}

\textbf{Task and Dataset.} First, we follow \citet{yu2025density} and evaluate scRatio on estimating closed-form $\log$-likelihood ratios on multivariate Gaussian distributions. Consider the following tractable distributions:
\begin{align*}
     q := \mathcal{N}(s &\mathbbm{1}_d, \mathbb{I}_d) \:, \quad q' := \mathcal{N}(\bm{0}_d, \mathbb{I}_d)  \: , 
\end{align*} 
where $\mathbbm{1}_d$ is the one-vector of $d$-dimensions and  $s\in \{1, 2\}$ a scalar. The aim is to estimate the true ratio $r := \log\frac{q}{q'} \,$ using samples from the two distributions. For each value of $s$, we consider the performance across several dimensions $d\in\{2, 5, 10, 20, 30\}$.

\textbf{Baselines.} For this experiment, we compare scRatio with three baselines (see \cref{sec: gaussian_simulation_experiment_app} for a detailed discussion). 

\begin{itemize}
[topsep=0pt,itemsep=0pt,partopsep=0pt,parsep=0.0cm,leftmargin=5pt]
\item Naive approach: Instead of approximating $r$ according to \cref{prop:log-ratio-ode}, we estimate the numerator and denominator likelihoods using \cref{eq: change_of_variable} and compute their $\log$-ratio.
\item Time Score Matching (TSM): Given an interpolant $p_t$, for $t\in[0,1]$, between two densities $p_0$ and $p_1$,  TSM expresses the ratio between $p_1$ and $p_0$ as the time integral of the score $\frac{\partial}{\partial t}\log p_t(\vx)$. We follow \citet{choi2022density}'s parameterization, which bypasses regressing an intractable time score via integration by parts.
\item Conditional TSM (CTSM): Using elements of flow matching to define $p_t$ (see \cref{sec:flow_matching}), \citet{yu2025density} derive a tractable and more efficient formulation of the time score conditioned on samples from $p_0$ and $p_1$.  
\end{itemize}
To compare with our method in modeling ratios between arbitrary data distributions, we consider variants of TSM and CSTM based on Schrödinger Bridge (SB) interpolants between \(p_0\) and \(p_1\). This formulation relies solely on samples from the two distributions without any distributional assumptions (see \cref{sec: gaussian_simulation_experiment_app} for further elaboration). We tuned the baselines in terms of learning rate and SB variance. 

\begin{table}
\caption{MAE ($\downarrow$) between true and predicted MI across data dimensions. Results are averaged over three training runs and reported as mean ± standard error. The best and second-best results per dimension are highlighted in bold and underlined, respectively.}
\label{tab: mi_results}
\resizebox{0.48\textwidth}{!}{%
\begin{tabular}{c|ccccc}
\hline
Model &
  \multicolumn{5}{c}{$d$} \\
 &
  20 &
  40 &
  80 &
  160 &
  320 \\ \hline
Classifier baseline &
  0.27 \scriptsize{± 0.04} &
  9.99 \scriptsize{± 0.00} &
  19.51 \scriptsize{± 0.18} &
  0.83 \scriptsize{± 0.11} &
  26.68 \scriptsize{± 3.70} \\
TSM with SB paths &
  \textbf{0.03 \scriptsize{± 0.01}} &
  0.38 \scriptsize{± 0.02} &
  0.25 \scriptsize{± 0.10} &
  0.89 \scriptsize{± 0.11} &
  3.55 \scriptsize{± 0.12} \\
CTSM with SB paths &
  0.06 \scriptsize{± 0.01} &
  0.09 \scriptsize{± 0.04} &
  0.23 \scriptsize{± 0.03} &
  0.87 \scriptsize{± 0.10} &
  \underline{2.15 \scriptsize± 0.08} \\ \hline
\begin{tabular}[c]{@{}c@{}}scRatio \\ \scriptsize$\sigma_{\mathrm{min}}=0, \lambda=0$\end{tabular} &
  0.05 \scriptsize{± 0.01} &
  \underline{0.05 \scriptsize± 0.02} &
  \underline{0.05 \scriptsize± 0.03} &
  \underline{0.16 \scriptsize± 0.05} &
  21.18 \scriptsize{± 0.14} \\
\begin{tabular}[c]{@{}c@{}}scRatio \\ \scriptsize$\sigma_{\mathrm{min}}=0.1, \lambda=0$\end{tabular} &
  \underline{0.04 \scriptsize{± 0.02}} &
  \textbf{0.03 \scriptsize{± 0.02}} &
  \textbf{0.02 \scriptsize{± 0.01}} &
  \underline{0.16 \scriptsize{± 0.07}} &
  21.00 \scriptsize{± 0.26} \\
\begin{tabular}[c]{@{}c@{}}scRatio \\ \scriptsize$\sigma_{\mathrm{min}}=0, \lambda=0.25$\end{tabular} &
  \textbf{0.03 \scriptsize{± 0.01}} &
  0.09 \scriptsize{± 0.03} &
  0.07 \scriptsize{± 0.04} &
  \textbf{0.11 \scriptsize{± 0.03}} &
  \textbf{1.16 \scriptsize{± 0.27}} \\ \hline
\end{tabular}
}
\end{table}

\textbf{Metrics and Results.} In \cref{fig: fig_mse}, we report the Mean Squared Error (MSE) between the true and the estimated $\log$-ratios. For scRatio, we display the three best models for each of the three schedules described in \cref{sec:flow_matching} and \cref{sec: explanation_schedules}.  Notably, scRatio outperforms the baselines for the considered values of $s$ and $d$, while no significant performance difference is apparent across schedules. Moreover, \cref{fig: fig_mse}b shows that our approach achieves a significantly lower runtime for estimating the $\log$-ratio compared to the counterpart naive solution. The decrease in runtime with the dimensionality is likely a byproduct of combining adaptive solvers with smoother high-dimensional vector fields. Our results suggest a simultaneous gain in both performance and efficiency. Additional analyses and discussions are available in \cref{sec: additional_results_mse} to
\ref{sec:approx_effect}. 

 \begin{table*}
\centering
\caption{Comparison of scRatio with three distinct scheduling settings and baseline methods across increasing levels of DA controlled by the parameter $a$ (Eq.~\ref{eq: cluster_proportion}). We report (I) the Spearman correlation $\rho(\cdot, a)$ between each metric and the ground-truth DA level $a$, and (II) the average metric value over high-DA regimes ($a \geq 0.3$). Results are averaged over three independent training runs and reported as mean $\pm$ standard error. In bold and underlined are the best and second-best results per metric, respectively. }
\label{tab: da_results}
\resizebox{1\textwidth}{!}{%
\begin{tabular}{c|c|c|c|c|c|c}
\hline
Model                                                         & $\rho(\textrm{AUC}, a)$ ($\uparrow$) & AUC ($a\geq0.3$) ($\uparrow$)        & $\rho(\textrm{NAR}, a)$ ($\uparrow$) & NAR ($a\geq0.3$) ($\uparrow$)         & $\rho(\textrm{CSP}, a)$ ($\uparrow$) & CSP ($a\geq0.3$) ($\uparrow$)        \\ \hline
MrVI                                                          & 0.71\scriptsize{± 0.09}              & 0.89 \scriptsize{± 0.00}          & 0.83 \scriptsize{± 0.01}             & 7.33 \scriptsize{± 0.27}           & 0.67 \scriptsize{± 0.06}             & 0.97 \scriptsize{± 0.00}          \\
MELD                                                          & 0.48 \scriptsize{± 0.00}             & 0.93 \scriptsize{± 0.00}          & 0.17 \scriptsize{± 0.00}             & 6.45 \scriptsize{± 0.00}           & 0.27 \scriptsize{± 0.00}             & \textbf{0.99 \scriptsize{± 0.00}} \\ \hline
scRatio \scriptsize{($\sigma_{\mathrm{min}}=0, \lambda=0$)}   & \textbf{1.00 \scriptsize{± 0.00}}    & {\ul 0.95 \scriptsize{± 0.00}}    & \textbf{1.00 \scriptsize{± 0.00}}    & {\ul 9.64 \scriptsize{± 0.45}}     & {\ul 0.95 \scriptsize{± 0.01}}       & {\ul 0.98 \scriptsize{± 0.00}}    \\
scRatio \scriptsize{($\sigma_{\mathrm{min}}=0.1, \lambda=0$)} & {\ul 0.99 \scriptsize{± 0.01}}       & \textbf{0.96 \scriptsize{± 0.00}} & \textbf{1.00 \scriptsize{± 0.00}}    & \textbf{11.39 \scriptsize{± 0.30}} & {\ul 0.95 \scriptsize{± 0.03}}       & {\ul 0.98 \scriptsize{± 0.00}}    \\
scRatio \scriptsize{($\sigma_{\min}=0, \lambda=0.01$)}        & {\ul 0.99 \scriptsize{± 0.01}}       & {\ul 0.95 \scriptsize{± 0.00}}    & {\ul 0.98 \scriptsize{± 0.02}}       & 9.39 \scriptsize{± 0.19}           & \textbf{0.98 \scriptsize{± 0.01}}    & {\ul 0.98 \scriptsize{± 0.00}}    \\ \hline
\end{tabular}
}
\end{table*}

\subsection{Mutual Information Estimation}\label{sec: mi_estimation}

\textbf{Task and Dataset.} We replicate the synthetic experiments in \citet{yu2025density} for estimating the Mutual Information (MI) between structured Gaussians. Consider the distributions
$$
q = \mathcal{N}(\mathbf{0}_d, \mathbb{I}_d) \:, \quad q' = \mathcal{N}(\mathbf{0}_d, \bm{\Sigma}) \: ,
$$
with $d$ being an even number, and where $\bm{\Sigma}$ is a block-diagonal matrix consisting of $2 \times 2$ blocks with $1$ on the diagonal and $0.8$ off-diagonal. Let $\vx\sim q_0$ and split it into $\vx^{\mathrm{{odd}}}=(x^1, x^3, ..., x^{d-1})$ and $\vx^{\mathrm{{even}}}=(x^2, x^4, ..., x^{d})$, one can show that the MI between $\vx^{\mathrm{{odd}}}$ and $\vx^{\mathrm{{even}}}$ is available in closed form as the following expectation:
\begin{equation}\label{eq: mi_cf}
    I(\vx^{\mathrm{{even}}}, \vx^{\mathrm{{odd}}})= \mathbb{E}_{q(\vx)} \left[ \log \frac{q(\vx)}{q'(\vx)} \right] 
\end{equation}
(see \cref{sec: mi_experiment_app} for more details). Following prior work, we consider how MI estimation scales across dimensions for $d\in\{20, 40, 80, 160, 320\}$.

\textbf{Baselines.} We train scRatio under different scheduling strategies to approximate \cref{eq: mi_cf} as a likelihood ratio, using 100{,}000 samples from each of the two distributions. The MI is estimated by averaging the learned ratios evaluated on samples from $q$. As baselines, we consider the CTSM and TSM models with SB paths as described in \cref{sec: simulated_gaussians} as well as a simple classifier model (\cref{sec: mi_experiment_app}). Similar to the previous experiment, we adopt CTSM and TSM with SB paths to enable a fair comparison with scRatio in a purely sample-based setting without distributional assumptions.

\textbf{Metrics and Results.} As the MI is available for different dimensions (see \cref{sec: mi_experiment_app}), we report the Mean Absolute Error (MAE) between the predicted and true value. \cref{tab: mi_results} shows that different variants of scRatio overcome the baselines on four out of five dimensions and tie as best-performing on the remaining one. Crucially, adding noise to the flow matching probability path ($\lambda=0.25$) yields significantly more precise performance in higher dimensions. 

\begin{figure*}
  \centering
  \includegraphics[width=0.99\textwidth]{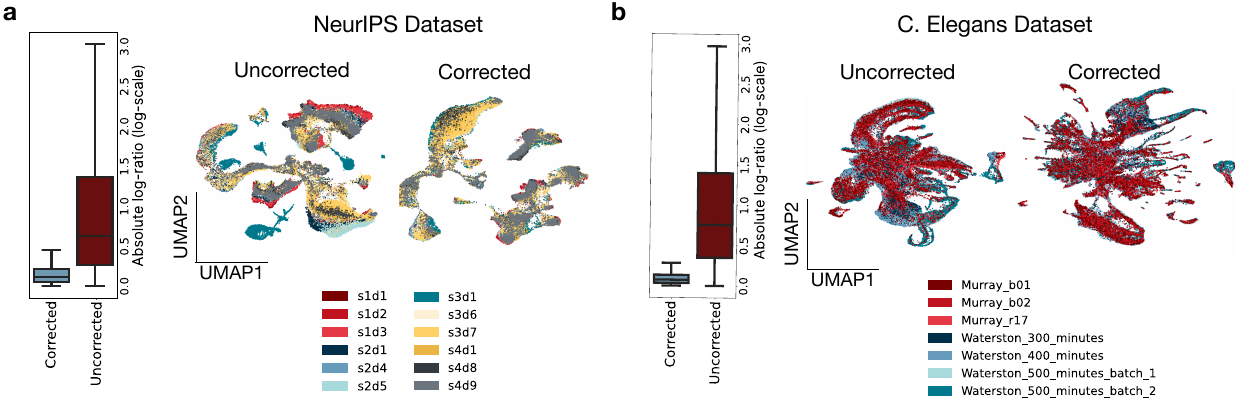}
  \caption{Ratio-based batch correction evaluation on the NeurIPS 2021  (panel \textbf{a}) and C. Elegans (panel \textbf{b}) datasets. Left: Distribution of the absolute estimated $\log$-ratio before and after batch correction for each of the measured batch and cell type combinations in the two datasets, estimated by training scRatio alternatively on the original and batch-corrected representations. Right: UMAP plots calculated before and after batch correction with scVI. To compute the UMAP, we use 50-dimensional representations obtained by PCA and scVI for uncorrected and corrected data, respectively. Batch names are annotated below each plot.}
  \label{fig:batch_correction_eval}
\end{figure*}

\subsection{Single-cell Differential Abundance Estimation}\label{sec: da_analysis}

\textbf{Task.} The most common application of likelihood ratio estimation on cellular data is DA analysis, as described in \cref{sec: applications}. Here, positive and negative likelihood ratios at individual observations indicate that treatment or control cells, respectively, are overrepresented in the region of gene-expression space corresponding to a cellular state.

\textbf{Dataset.} To compare scRatio with existing methods, we design a semi-synthetic dataset with distinct ground truth levels of treatment and control overrepresentation, similar to \citet{boyeau2025deep}. We use a dataset of 68k Peripheral Blood Mononuclear Cells (PBMC) from \citet{Zheng2017}, which we divide into four clusters, with labels $c_i\in\{1,2,3,4\}$ for each cell $i$ (see \cref{fig: appendix_da_1}a). Then, we assign to each cell a control ($y_i=0$) or treatment ($y_i=1$) binary label with a cluster-specific probability $\Pi = \{\pi^k\}_{k=1}^4$, such that $y_i\sim\mathrm{Ber}(\pi=\pi^{c_i})$. To test the model at different levels of DA, we create 8 datasets with cluster proportions
\begin{equation}\label{eq: cluster_proportion}
    \Pi^a = \{0.5, 0.5 + a, 0.5 - a, 0.5\}, 
\end{equation}
for $a\in \{0, 0.05, 0.1, 0.2, 0.3, 0.4, 0.45, 0.5\}$. Thus, in clusters $1$ and $4$, cases and controls are always equally represented (ratio equals zero), while clusters $2$ and $3$ have a higher proportion of either treated or untreated cells based on $a$, introducing different levels of DA (see \cref{fig: appendix_da_1}b-c).  

\textbf{Baselines.} We consider the following baselines: 
\begin{itemize}
[topsep=0pt,itemsep=0pt,partopsep=0pt,parsep=0.0cm,leftmargin=5pt]
    \item MrVI \citep{boyeau2025deep}: A disentangled Variational Autoencoder (VAE) model relying on variational posterior approximation for the DA estimation.
    \item MELD  
    \citep{burkhardt2021quantifying}: A kNN approach based on label smoothing over local neighborhoods.
\end{itemize}
We do not compare scRatio with Milo \citep{dann2022differential}, despite their relatedness in a broader scope, because, unlike methods that score single cells, it predicts DA at the level of cellular neighborhoods. 

Similar to scRatio, all models produce an estimate of the true $\log$-likelihood ratio $\log r(\vx \mid 1,0)=\log \frac{q(\vx \mid 1)}{q(\vx \mid 0)}$ applied to a cell $\vx$. Positive and negative values indicate higher treatment and control likelihoods, respectively. 

\textbf{Metrics.} As we do not have a ground-truth ratio per cell, we develop metrics that assess whether the distinct models predict likelihood ratios that match the DA in the dataset.
\begin{itemize}
[topsep=0pt,itemsep=0pt,partopsep=0pt,parsep=0.0cm,leftmargin=5pt]
    \item AUC: The \textbf{A}rea \textbf{U}nder the precision-recall \textbf{C}urve (AUC) between the absolute values of the estimated ratios and the vector of DA binary labels, which we set to 1 for clusters 2 and 3 and 0 for clusters 1 and 4. 
    \item Normalized Abundance Ratio (NAR): How much larger the absolute ratios are in DA clusters (2 and 3) versus non-DA clusters (1 and 4). 
    \item Correct Sign Proportion (CSP): Proportion of ratios that agree in sign with the real $\log$-odds of DA clusters (positive for cluster 2 and negative for cluster 3). 
\end{itemize}
All of the above metrics are expected to increase with the amount of DA in clusters 2 and 3. Thus, we report their Spearman correlation with $a$ (see \cref{eq: cluster_proportion}), together with their average values for $a\geq 0.3$ (high DA). For scRatio, we choose the best results for different schedulers optimized over the mean $\mathrm{AUC}  (a\geq 0.3)$ metric. 

\textbf{Results.} In \cref{tab: da_results}, our model, especially with deterministic paths and $\sigma_{\mathrm{min}}=0.1$, achieves the best performance across all the metrics, except for $\mathrm{CSP} (a\geq0.3)$, where MELD marginally overcomes it. We expand on the advantages of using scRatio against competing methods in \cref{sec: da_experiment_app}.

\subsection{Batch Correction Evaluation}\label{sec: batch_correction}

\begin{figure}
    \centering
    \includegraphics[width=1\linewidth]{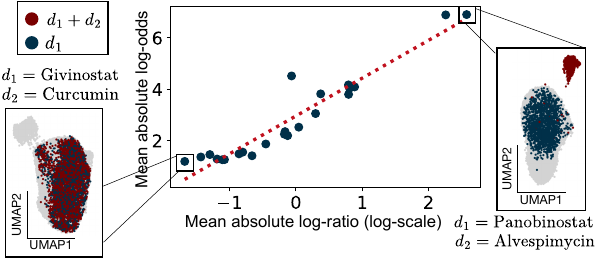}
    \caption{On the x-axis, the mean absolute $\log$-likelihood ratio $\log r^\theta(\vx \mid (d_1, d_2), d_1)$ evaluated on cells perturbed with both $(d_1, d_2)$ and only $d_1$. On the y-axis, the mean log-odds of a classifier trained to discriminate cells treated with $(d_1, d_2)$ from cells treated with $d_1$ only. Each point is a combination. UMAP plots show extreme cases of the combinatorial effect and lack thereof.}
    \label{fig: log_odds_log_ratios}
\end{figure}

\textbf{Task.} As a case study, we use scRatio to qualitatively assess the effectiveness of batch correction, a process that removes unwanted sources of variation from the data (technical effects) while preserving meaningful signals (biological effects). Let $\boldsymbol{X}:=\{(\vx^{(i)}, y^{(i)}, b^{(i)})\}_{i=1}^N$ denote the set of $N$ collected measurements, where $y^{(i)}$ and $b^{(i)}$ are, respectively, cell type and batch labels. Moreover, let $\hat{\vx}:=f(\vx^{(i)}, y^{(i)}, b^{(i)})$ be the batch-corrected representation of sample $\vx^{(i)}$, where $f(\vx^{(i)}, y^{(i)}, b^{(i)})$ is a correction function. We train our model to approximate the true ratio $\frac{q(\cdot\mid y^{(i)}, b^{(i)})}{q(\cdot\mid y^{(i)})}$ first on samples $\vx^{(i)}$ and then on their corrected counterparts $\hat{\vx}^{(i)}$. We use the change in the estimated ratios as a proxy for correction quality. Specifically, we expect a decrease in the absolute value of the $\log$-likelihood ratio upon successful correction, as conditioning on $b^{(i)}$ does not carry additional information about the samples (\cref{sec: batch_correction_experiment_appendix}).

\textbf{Datasets and Results.} We consider the following datasets: (I) The NeurIPS 2021 dataset \citep{luecken2021sandbox}, consisting of 90,261 bone marrow cells from 12 healthy human donors, and (II) the C. Elegans dataset \citep{packer2019lineage}, which comprises 89,701 cells across 7 distinct experimental sources. For both datasets, we regressed out the distributional shift induced by the batch label using scVI \citep{lopez2018deep}. In \cref{fig:batch_correction_eval}a-b, we confirm our hypothesis by observing a decrease in $\log$-ratio magnitude upon batch correction, suggesting that scRatio could complement existing batch correction evaluations \citep{luecken2022benchmarking} with a principled likelihood-based approach (see \cref{fig:batch_correction_appendix_plot}).

\subsection{Drug Combination Effect Estimation}\label{sec: combo_sciplex}

\begin{figure*}
    \centering
    \includegraphics[width=1\linewidth]{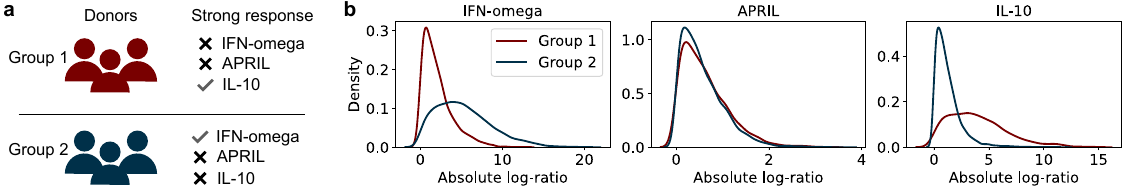}
    \caption{Patient-specific differential response to perturbation. (\textbf{a}) Depiction of the experimental setup. Following \citet{oesinghaus2025single}, we divide donors into two groups responding differently to distinct cytokine treatments. \textbf{(b)} Absolute $\log$-likelihood ratios between treated and control distributions evaluated on perturbed cells from different donor groups. A higher ratio indicates evidence of a strong response to a cytokine by a donor group, while $\log$-ratios around 0 refer to a lower shift from controls.}
    \label{fig: pbmc_figure}
\end{figure*}

\textbf{Task and Dataset.} Assessing treatment combinations is a key challenge in computational biology. In this experiment, we use likelihood ratios to detect interaction effects from combinatorial perturbations. For this, we analyze the ComboSciPlex dataset \citep{srivatsan2020massively}, which contains 63,378 cells treated with either a single drug $d_1$ or a combination $(d_1, d_2)$. We quantify the difference between cells treated with $(d_1, d_2)$ versus $d_1$ alone by approximating $\log \frac{q(\vx\mid d_1, d_2)}{q(\vx\mid d_1)}$ using scRatio. Significant interactions correspond to larger absolute values of the $\log$-ratio, suggesting that single and double treatment populations do not mix.

\textbf{Metrics.} In the absence of ground truth, we evaluate the plausibility of the estimated likelihood ratios through their correlation with classifier performance. For each pair of single $d_1$ and combined $(d_1, d_2)$ treatments, we train a binary classifier inputting cell states to distinguish between the two conditions, and use the resulting absolute $\log$-odds as a measure of distributional separation. We also estimate absolute $\log$-ratios between the corresponding conditional likelihoods evaluated on cells treated with $(d_1, d_2)$ and $d_1$ only (see \cref{sec: drug_comb_experiment}). Good calibration between the quantities indicates that our model pinpoints combinatorial effects.

Note that while a classifier is useful to rank perturbations, it does not directly quantify DA, and in \cref{fig: overlap_plot} we show that its likelihood ratio prediction poorly scales with dimensionality, as the performance saturates and fails to quantify likelihood shifts. We use it to validate that our average ratios capture overall responses while preserving a likelihood formulation.

\textbf{Results.} In \cref{fig: log_odds_log_ratios}, mean absolute $\log$-odds and $\log$-likelihood ratios show positive correlation. Low values of both correspond to cases where adding $d_2$ does not induce a state shift relative to a single treatment, whereas high values indicate a differential response between single perturbations and their combinations (UMAP plots in \cref{fig: log_odds_log_ratios}). Additional qualitative validation is available in \cref{sec: app_combo_results}.

\subsection{Patient-Specific Treatment Response Estimation}
\textbf{Task and Dataset.} Next, we assess whether the $\log$-likelihood ratios estimated by our model can be used to analyze the differential response of patients to treatment. We use a dataset of approximately 10 million PBMC cells from 12 donors \citep{oesinghaus2025single}. For each sample, cells are measured in their control state and perturbed by 90 different cytokines. Let $\boldsymbol{X}:=\{(\vx^{(i)}, s^{(i)}, d^{(i)})\}_{i=1}^N$ be a set of $N$ cells, where $s^{(i)}$ and $d^{(i)}$ are sample and cytokine treatment labels. Here, $d\in\{d^{c}\}\cup\{d^k\}$, where $d^c$ is a control label and $k\in\{1,...,n_k\}$ indicates one of the $n_k$ available cytokines. For a cell $\vx$ from donor $s$ and a treatment $d^k$, we use scRatio to estimate $\log r(\vx \mid (s, d^k), (s, d^c)) = \log \frac{q(\vx \mid s, d^k)}{q(\vx \mid s, d^c)}$. Positive average scores correspond to donor-treatment pairs showing a statistical difference between cases and controls. 

\textbf{Evaluation.} In \citet{oesinghaus2025single}, the authors identify cytokine treatments inducing a differential response between two groups of donors. In \cref{fig: pbmc_figure}a, we illustrate our setup. We select two treatments that induce a strong shift from controls in one of the two groups (IL-10 for group 1 and IFN-omega for group 2), and one that exhibits a similar weak response in both groups (APRIL). For perturbed cells from different donors, we evaluate the $\log$-likelihood ratio between treated cells and controls. In this setting, we qualitatively evaluate our results, inspecting whether the distribution of scores reflects the reported biological differences between donors.

\textbf{Results.} \cref{fig: pbmc_figure}b shows that likelihood ratios estimated by our models reflect biological trends reported in the dataset. Higher statistical differences between treated and controls for IFN-omega and IL-10 lead to higher absolute ratios in patient groups 2 and 1, respectively, when compared to the counterpart group. Moreover, the distributions of $\log$-ratios similarly center around zero for both groups for cytokine APRIL, confirming expectations. Thus, scRatio provides a tool for patient stratification and donor-based analysis in large perturbation cohorts.

\section{Discussion}

\textbf{Summary.} We introduce scRatio, a method for estimating likelihood ratios between pairs of intractable distributions by explicitly modeling the dynamics of the log-density ratio along sampling trajectories. We derive an ODE that governs the evolution of the $\log$-ratio and approximate it using conditional flow-based generative models. By composing learned velocity fields and scores from multiple conditionings, scRatio tracks likelihood ratios directly along a single simulation path, avoiding redundant ODE solves and reducing the discretization error that accumulates non-linearly during integration, in turn enhancing estimation quality. We demonstrate that this formulation captures complex statistical dependencies across both synthetic benchmarks and real-world single-cell applications.

\textbf{Limitations.} When comparing distributions with limited or no overlap in support, the $\log$-likelihood ratios are evaluated along vector fields that are non-generative for at least one of the densities in the ratio. This can lead to numerical instabilities and lower simulation quality, particularly in high-dimensional settings (see \ref{sec:failure_detection} for possible failure detection methods). The method also incurs the overhead of training separate networks for the score and velocity.

\section*{Impact Statement}
The presented work addresses a fundamental problem in probabilistic modeling for single-cell data by enabling efficient likelihood-based comparisons between complex conditional data distributions. By providing a principled approach to density ratio estimation, scRatio supports key biological analyses such as treatment effect estimation and evaluation of batch effects. We envision releasing scRatio as a user-friendly, open-source tool to facilitate its adoption in single-cell research. As scRatio operates on biological data, it may be applied in sensitive settings involving clinical information and patient-derived datasets.
\section*{Acknowledgments}
A.P. is supported by the Helmholtz Association under the joint research school Munich School for Data Science (MUDS). A.P. and F.J.T. acknowledge support from the European Union (ERC PoC CellCourier, grant number 101248740 and ERC DeepCell, grant number 101054957). Additionally, this work was supported by the Helmholtz Association within the framework of the Helmholtz Foundation Model Initiative (VirtualCell) and Networking Fund (CausalCellDynamics, grant \# Interlabs-0029).

\bibliography{example_paper}
\bibliographystyle{icml2026}

\newpage
\appendix
\onecolumn

\section{Code Availability}
We made our code available at \href{https://github.com/theislab/scRatio}{scRatio repo}. All datasets are public and extracted from the cited publications. 

\section{Theoretical Supplement}
\subsection{Derivation of the Conditional Vector Field}\label{sec: conditional_field_app}
Consider the following conditional Gaussian probability path from \citet{lipman2023flow}:
\begin{align}\label{eq: cond_path_appendix}
    \vx_t \sim p_t(\cdot \mid \vx_1) := \mathcal{N}(\alpha_t\vx_1, \sigma_t^2 \mathbb{I}_d) \, ,
\end{align}
where $\vx_1\sim q(\vx_1)$ is a data point from the data distribution $q$, $t\in[0,1]$ and $(\alpha_t, \sigma_t)$ is a time-dependent scheduler. In the simplest setting, $\alpha_0=\sigma_1=0$ and $\alpha_1=\sigma_0=1$. Nevertheless, we also consider the option of adding noise around the data by setting $\sigma_1=\sigma_{\mathrm{min}}$, where $\sigma_{\mathrm{min}}$ is a small constant (see \cref{sec: explanation_schedules}).

Given the tractable formulation of conditional paths, one can draw a sample from $p_t(\cdot \mid \vx_1)$ in closed form:
$$
\vx_t = \alpha_t \vx_1 + \sigma_t \epsilon, \quad \epsilon\sim \mathcal{N}(\bm{0}_d, \mathbb{I}_d) \, .
$$
As the path from the realization of a noise variable $\epsilon$ and data is tractable, its time derivative can be written as:
\begin{align}
    u_t(\vx_t \mid \vx_1) &= \dot{\alpha}_t \vx_1 + \dot{\sigma}_t \epsilon \nonumber \\
    &= \dot{\alpha}_t\vx_1 + \frac{\dot{\sigma}_t}{\sigma_t}(\vx_t - \alpha_t \vx_1) \label{eq: cond_vec_field_appendix} \; ,
\end{align}
where we used $\epsilon=\frac{\vx_t-\alpha_t \vx_1}{\sigma_t}$. 

\subsection{Relationship Between Score and Vector Field}\label{sec: relationship_score_vf}
The score of the logarithm of the conditional Gaussian paths in \cref{eq: cond_path_appendix} is:
\begin{equation}\label{eq: conditional_score}
    \nabla_{\vx_t} \log p_t(\vx_t \mid \vx_1) = - \frac{\vx_t-\alpha_t\vx_1}{\sigma_t^2} \; .
\end{equation}
Following \citet{zheng2023guided}, we isolate $\vx_1$ and plug the result into \cref{eq: cond_vec_field_appendix}. Thus, we get the following relationship:
\begin{equation}\label{eq: score_flow_relation_app}
    u_t(\vx_t \mid \vx_1) = \frac{\dot{\alpha}_t}{\alpha_t}\vx_t + \frac{\sigma_t}{\alpha_t}(\dot{\alpha}_t\sigma_t - \alpha_t\dot{\sigma}_t)\nabla_{\vx_t}\log p_t(\vx_t \mid \vx_1) \: .
\end{equation}
The relationship between the conditional score and vector field holds for the marginals as well. Setting $a_t:=\frac{\dot{\alpha}_t}{\alpha_t}$ and $b_t:=\frac{\sigma_t}{\alpha_t}(\dot{\alpha}_t\sigma_t - \alpha_t\dot{\sigma}_t)$, and using the following results from \citet{zheng2023guided}
\begin{equation}\label{eq: marginal_score}
\begin{aligned}
    \nabla_{\vx_t}\log p_t(\vx_t) &  = \int \nabla_{\vx_t} \log p_t(\vx_t \mid \vx_1) \frac{p_t(\vx_t \mid \vx_1)q(\vx_1)}{p_t(\vx_t)}\mathrm{d}\vx_1  \\ &=\int \nabla_{\vx_t} \log p_t(\vx_t \mid \vx_1) p_t(\vx_1 \mid \vx_t)\mathrm{d}\vx_1\, ,
\end{aligned}
\end{equation}
one can show that
\begin{align}
    u_t(\vx_t) &= \int u_t(\vx_t \mid \vx_1) p_t(\vx_1\mid\vx_t) \mathrm{d}\vx_1 \\
    & = \int \left[a_t \vx_t p_t(\vx_1 \mid \vx_t) + b_t \nabla_{\vx_t}\log p_t(\vx_t \mid \vx_1) p_t(\vx_1 \mid \vx_t)\right]\mathrm{d}\vx_1 \, \\
     & = a_t\vx_t\int p_t(\vx_1 \mid \vx_t)\mathrm{d}\vx_1 + b_t \int\nabla_{\vx_t}\log p_t(\vx_t \mid \vx_1) p_t(\vx_1 \mid \vx_t)\mathrm{d}\vx_1 \, \\
    & = a_t \vx_t + b_t \nabla_{\vx_t} \log p_t(\vx_t) \, .
\end{align}

\subsection{Reparameterization Challenges}\label{sec: estimation_of_score}
We provide a list of the closed-form objectives for the conditional score and vector field across scheduling approaches in \cref{tab: score_flow_closed_form}, as both can be evaluated analytically following \cref{eq: cond_vec_field_appendix} and \cref{eq: conditional_score}. Moreover, we expand on the different schedules presented in \cref{tab: score_flow_closed_form} in \cref{sec: explanation_schedules}.

\begin{table}
\caption{Closed-form conditional vector field $u_t(\vx_t \mid \vx_1)$ and score $\nabla_{\vx_t}\log p_t(\vx_t \mid \vx_1)$ for different conditional probability path schemes. The tuple $(\alpha_t, \sigma_t)$ represents the schedule of the path between noise and data. A more detailed description of the schedules is in \cref{sec: explanation_schedules}.}
\label{tab: score_flow_closed_form}
\centering
\resizebox{0.95\textwidth}{!}{%
\begin{tabular}{c|c|c|c}
\hline
$\alpha_t$ & $\sigma_t$                                & $u_t(\vx_t \mid \vx_1)$                                               & $\nabla_{\vx_t}\log p_t(\vx_t \mid \vx_1)$        \\ \hline
$t$        & $1-t$                                     & $(\vx_1-\vx_t)/(1-t)$                                                 & $(t\vx_1-\vx_t)/(1-t)^2$                          \\
$t$        & $1-(1-\sigma_{\mathrm{min}})t$            & $(\vx_1 - (1 - \sigma_{\min})\, \vx_t)/(1 - (1 - \sigma_{\min})\, t)$ & $(t\vx_1-\vx_t)/(1-t+t\sigma_{\mathrm{min}})^2$   \\
$t$        & $C=\left[(1-\lambda)t^2+(\lambda-2)t +1\right]^{\frac{1}{2}}$ & $(((\lambda-2)t+2)\vx_1+(\lambda+2(1-\lambda)t-2)\vx_t) / (2C^2)$      & $(t\vx_1-\vx_t)/((1-\lambda)t^2+(\lambda-2)t +1)$ \\ \hline
\end{tabular}
}
\end{table}


Based on \cref{sec: relationship_score_vf}, one can learn the relationship between the score and the generating vector field, retrieving the counterpart through reparameterization in \cref{eq: score_flow_relation}. In \cref{tab: score_flow_closed_form_marginal} we list the explicit equations tying the score and the generating vector field together in the marginal setting. One of the main challenges of this approach is the stability of the reparameterization method, where most of the tractable results from evaluating \cref{eq: score_flow_relation} explode at certain values of $t$. We provide qualitative evidence for the reparameterization instability in \cref{fig: reparam_effect_empirical}.

\textbf{Score explosion.} At $t=1$, the score takes high values as the probability path concentrates most of the mass around the data. This results in the division by a low number in the conditional case, and in most of the reparameterization settings in \cref{tab: score_flow_closed_form_marginal}. 

\textbf{Vector field explosion.} Given an approximation for the marginal score, deriving the vector field implies a division by $0$ at $t=0$ (see \cref{tab: score_flow_closed_form_marginal}). However, if we learned the exact marginal score in \cref{eq: marginal_score}, the proportionality with $1/t$ would cancel out. In fact, by the linearity of the conditional formulation, one can write the marginal score in \cref{eq: marginal_score} as $\nabla_{\vx_t}\log p_t(\vx_t)=(\mathbb{E}_{p_t(\vx_1 \mid \vx_t)}[\vx_1]-\vx_t)/\sigma_t^2$ and plug it  into \cref{eq: score_flow_relation_app}, yielding:
\begin{align}
 u_t(\vx_t) &= \frac{\dot{\alpha}_t}{\alpha_t}\vx_t + \frac{\sigma_t}{\alpha_t}(\dot{\alpha}_t\sigma_t -\alpha_t\dot{\sigma}_t)\frac{\alpha_t\mathbb{E}_{p_t(\vx_1 \mid \vx_t)}[\vx_1]-\vx_t}{\sigma_t^2} \nonumber\\
 &= \frac{\dot{\alpha}_t}{\alpha_t}\vx_t + \frac{\dot{\alpha}_t}{\alpha_t}\left[\alpha_t\mathbb{E}_{p_t(\vx_1 \mid \vx_t)}[\vx_1]-\vx_t\right]-
 \frac{\dot{\sigma}_t}{\sigma_t}\left[\alpha_t\mathbb{E}_{p_t(\vx_1 \mid \vx_t)}[\vx_1]-\vx_t\right] \nonumber\\
 &= \frac{\dot{\sigma}_t}{\sigma_t}\vx_t + \dot{\alpha}_t\mathbb{E}_{p_t(\vx_1 \mid \vx_t)}[\vx_1]-\frac{\dot{\sigma}_t}{\sigma_t}\alpha_t\mathbb{E}_{p_t(\vx_1 \mid \vx_t)}[\vx_1] \nonumber\\
 &=\frac{\dot{\sigma_t}\vx_t+(\dot{\alpha}_t\sigma_t-\alpha_t\dot{\sigma}_t)\mathbb{E}_{p_t(\vx_1 \mid \vx_t)}[\vx_1]}{\sigma_t} \, \label{eq: vf_expectation_score}.
\end{align}
Note that \cref{eq: vf_expectation_score} does not have $\alpha_t$ in the denominator. However, given a parameterization $s^{\psi}$ of the score, we have $s^\psi\approx\nabla_{\vx_t}\log p_t(\vx_t)$. Thus, computing the vector field as a function of a parameterized score still involves a division by a low number, causing numerical instability.

\begin{table}
\caption{Closed-form relation between score and vector field for different noise schedules. The tuple $(\alpha_t, \sigma_t)$ represents the schedule of the path between noise and data, while $(\dot{\alpha}_t, \dot{\sigma}_t)$ is the respective time derivative. The relationship between score and vector field is obtained from \cref{eq: score_flow_relation}.  }
\label{tab: score_flow_closed_form_marginal}
\centering
\resizebox{\textwidth}{!}{%
\begin{tabular}{c|c|c|c|c|c}
\hline
$\alpha_t$ & $\sigma_t$                                & $\dot{\alpha}_t$ & $\dot{\sigma}_t$                 & $\nabla_{\vx_t}\log p^{\theta}_t(\vx_t)$                       & $u_t^{\theta}(\vx_t)$                                                            \\ \hline
$t$        & $1-t$                                     & $1$              & $-1$                             & $(t u_t^{\theta}(\vx_t)-\vx_t)/(1-t)$                          & $(\vx_t+(1-t)\nabla_{\vx_t}\log p_t^{\theta}(\vx_t))/t$                          \\
$t$        & $1-(1-\sigma_{\mathrm{min}})t$            & $1$              & $\sigma_{\mathrm{min}}-1$        & $(t u_t^{\theta}(\vx_t)-\vx_t)/(1-(1-\sigma_{\mathrm{min}})t)$ & $(\vx_t+(1-(1-\sigma_{\mathrm{min}})t)\nabla_{\vx_t}\log p_t^{\theta}(\vx_t))/t$ \\
$t$        & $C=\left[(1-\lambda)t^2+(\lambda-2)t +1\right]^{\frac{1}{2}}$ & $1$              & $(\lambda - 2(\lambda-1)t-2)/2C$ & $(2(t u_t^\theta(\vx_t)-\vx_t))/((\lambda-2)t+2)$              & $(2\vx_t + ((\lambda-2)t+2)\nabla_{\vx_t}\log p_t^{\theta}(\vx_t))/2t$           \\ \hline
\end{tabular}
}
\end{table} 


\subsection{Explanation of the Schedules}\label{sec: explanation_schedules}
In the above \cref{tab: score_flow_closed_form} and \cref{tab: score_flow_closed_form_marginal}, we introduce three different scheduling options. We differentiate between schedules using the following taxonomy:
\begin{itemize}
[topsep=0pt,itemsep=0pt,partopsep=0pt,parsep=0.0cm,leftmargin=20pt]
    \item Deterministic path between noise samples and data. Point mass around the data. 
    \item Deterministic path between noise samples and data. Gaussian distribution around the data. 
    \item Gaussian distribution around the path between the noise samples and the data. Point mass around the data.  
\end{itemize}
Note that, while the probability paths are always stochastic in flow matching, the standard formulation form \citet{lipman2023flow} involves deterministic interpolations between the data and the prior distribution, once the prior samples are fixed. In the following, we use \cref{eq: gaussian_path} as a reference for the probability path. 

\textbf{Deterministic path between noise samples and data. Point mass around the data.} The schedule consisting of $\alpha_t=t$ and $\sigma_t=1-t$ defines a deterministic interpolation between a noise sample $\epsilon \sim \mathcal{N}(\mathbf{0}_d, \mathbb{I}_d)$ and a data point $\vx_1$. This can be easily seen after rewriting the formula for $\vx_t$:
\begin{equation}\label{eq: interpolation_appendix}
    \vx_t = t \vx_1 + (1-t)\epsilon \: . 
\end{equation}
In other words, the interpolation exactly retrieves $\vx_1$ and $\epsilon$ at times $t=1$ and $t=0$, respectively.

\textbf{Deterministic path between noise samples and data. Gaussian distribution around the data.} A common approach is to set a small variance $\sigma_{\mathrm{min}}$ at $t=1$ \citep{lipman2023flow}, preventing probability paths from degenerating at the data. This leads to $\alpha_t=t$ and $\sigma_t=1-(1-\sigma_{\mathrm{min}})t$. 

\textbf{Gaussian distribution around the path between the noise samples and the data. Point mass around the data.} The last schedule we explore sets $\alpha_t=t$ and $\sigma_t=\left[(1-\lambda)t^2+(\lambda-2)t +1\right]^{\frac{1}{2}}$. This is equivalent to drawing noise samples from $\epsilon \sim \mathcal{N}(\mathbf{0}_d, \mathbb{I}_d)$ and adding Gaussian noise to the path in \cref{eq: interpolation_appendix} with variance $\lambda \, t(1-t)$, as in \citet{tong2024simulation}. Specifically, we can rewrite this setting as follows:
\begin{equation}
\begin{aligned}
    \vx_t \sim \mathcal{N}(\bm{\mu}_t, \lambda \, t(1-t) \mathbb{I}_d) \, , \\
    \bm{\mu}_t \sim \mathcal{N}(\alpha_t \vx_1, (1-t)^2\mathbb{I}_d) \, .
\end{aligned}
\end{equation}
The above hierarchical model can be collapsed into a single Gaussian path with, as variance, the sum of the variances of the two nested models. One can draw a sample thereof as:
\begin{equation}\label{eq: hierarchical_gaussian}
\begin{aligned}
    \vx_t &= \bm{\mu}_t + \sqrt{\lambda \, t(1-t)}\epsilon \\
    &= (\alpha_t \vx_1 + (1-t)\epsilon') + \sqrt{\lambda \, t(1-t)}\epsilon \\
    &= \alpha_t \vx_1 + ((1-t)\epsilon' + \sqrt{\lambda \, t(1-t)}\epsilon) \, ,
\end{aligned}
\end{equation}
with $\epsilon, \epsilon' \sim \mathcal{N}(\mathbf{0}_d, \mathbb{I}_d)$. By the property of the sum of Gaussian distributions, \cref{eq: hierarchical_gaussian} implies:
\begin{equation}
\begin{aligned}
    \vx_t &\sim \mathcal{N}(\alpha_t \vx_1, [\lambda \, t(1-t)+(1-t)^2]\mathbb{I}_d) \\
    &= \mathcal{N}(\alpha_t \vx_1, [(1-\lambda)t^2+(\lambda-2)t +1]\mathbb{I}_d) \, ,
\end{aligned}
\end{equation}
which is the schedule reported in \cref{tab: score_flow_closed_form} and \cref{tab: score_flow_closed_form_marginal}. In the simplest case where $\lambda=1$, $\sigma_t=\sqrt{1-t}$.

\subsection{Derivation of \cref{prop:log-ratio-ode}}\label{sec: proof_prop_log_ratio_ode}
Let $\vx_t\in\mathbb{R}^d$ be an integral trajectory as the solution of the following ODE:
\begin{equation}
    \frac{\mathrm{d}}{\mathrm{d}t} \vx_t = b_t(\vx_t) \, , 
\end{equation}
with $t\in[0,1]$. Moreover, let $\{p_t(\vx_t)\}_{t\in[0,1]}$ and $\{p'_t(\vx_t)\}_{t\in[0,1]}$ be two probability paths generated, respectively, by the vector fields $\{u_t(\vx_t)\}_{t\in[0,1]}$ and $\{u'_t(\vx_t)\}_{t\in[0,1]}$ according to the continuity equation: 
\begin{align}
\frac{\partial}{\partial t} p_t(\vx)
  &= - \nabla \cdot \bigl(p_t(\vx)\, u_t(\vx)\bigr), \\
\frac{\partial}{\partial t} p'_t(\vx)
  &= - \nabla \cdot \bigl(p'_t(\vx)\, u'_t(\vx)\bigr) \, ,
\end{align}
such that $p_0=p'_0=\mathcal{N}(\mathbf{0}_d, \mathbb{I}_d)$. Note that these paths correspond to two distinct generative models sampling some data distribution starting from standard Gaussian noise. 

We define the logarithm of the time-marginal density ratio as the solution of the following ODE: 
\begin{align}
    \frac{\mathrm{d}}{\mathrm{d}t}\log r_t(\vx_t) &:= \frac{\mathrm{d}}{\mathrm{d}t}\log \frac{p_t(\vx_t)}{p'_t(\vx_t)} \\
    &:= \frac{\mathrm{d}}{\mathrm{d}t}\log p_t(\vx_t) - \frac{\mathrm{d}}{\mathrm{d}t}\log p'_t(\vx_t) \label{eq: log_ratio_expanded_app}\, , 
\end{align}
with $\log r_0(\vx_0)=0$.

We express the total derivative of the individual densities as follows:
\begin{align}
\frac{\mathrm{d}}{\mathrm{d}t} \log p_t(\vx_t)
  &= \frac{1}{p_t(\vx_t)} \frac{\mathrm{d}}{\mathrm{d}t} p_t(\vx_t) \\
  &= \frac{1}{p_t(\vx_t)}
     \left[
       \frac{\partial}{\partial t} p_t(\vx_t)
       + (\nabla_{\vx_t} p_t(\vx_t))^\top \frac{\mathrm{d}}{\mathrm{d}t} \vx_t
     \right] \\
  &= \frac{1}{p_t(\vx_t)}
     \left[
       - \nabla_{\vx_t} \cdot \bigl(p_t(\vx_t) u_t(\vx_t)\bigr)
       + (\nabla_{\vx_t} p_t(\vx_t))^\top b_t(\vx_t)
     \right] \\
  &= \frac{1}{p_t(\vx_t)}
     \left[
       - (\nabla_{\vx_t} p_t(\vx_t))^\top u_t(\vx_t)
       - p_t(\vx_t)\, \nabla_{\vx_t} \cdot u_t(\vx_t)
       + (\nabla_{\vx_t} p_t(\vx_t))^\top b_t(\vx_t)
     \right] \\
  &= - \nabla_{\vx_t} \cdot u_t(\vx_t)
     + \bigl(b_t(\vx_t) - u_t(\vx_t) \bigr)^\top
       \frac{\nabla_{\vx_t} p_t(\vx_t)}{p_t(\vx_t)} \\
  &= - \nabla_{\vx_t} \cdot u_t(\vx_t)
     + \bigl(b_t(\vx_t) - u_t(\vx_t)\bigr)^\top
       \nabla_{\vx_t} \log p_t(\vx_t).
\label{eq: final_p}
\end{align}

and 
\begin{align}
\frac{\mathrm{d}}{\mathrm{d}t} \log p'_t(\vx_t)
  &= \frac{1}{p'_t(\vx_t)} \frac{\mathrm{d}}{\mathrm{d}t} p'_t(\vx_t) \\
  &= \frac{1}{p'_t(\vx_t)}
     \left[
       \frac{\partial}{\partial t} p'_t(\vx_t)
       + (\nabla_{\vx_t} p'_t(\vx_t))^\top \frac{\mathrm{d}}{\mathrm{d}t} \vx_t
     \right] \\
  &= \frac{1}{p'_t(\vx_t)}
     \left[
       - \nabla_{\vx_t} \cdot \bigl(p'_t(\vx_t) u'_t(\vx_t)\bigr)
       + (\nabla_{\vx_t} p'_t(\vx_t))^\top b_t(\vx_t)
     \right] \\
  &= \frac{1}{p'_t(\vx_t)}
     \left[
       - (\nabla_{\vx_t} p'_t(\vx_t))^\top u'_t(\vx_t)
       - p'_t(\vx_t)\, \nabla_{\vx_t} \cdot u'_t(\vx_t)
       + (\nabla_{\vx_t} p'_t(\vx_t))^\top b_t(\vx_t)
     \right] \\
  &= - \nabla_{\vx_t} \cdot u'_t(\vx_t)
     - \bigl(u'_t(\vx_t) - b_t(\vx_t)\bigr)^\top
       \frac{\nabla_{\vx_t} p'_t(\vx_t)}{p'_t(\vx_t)} \\
  &= - \nabla_{\vx_t} \cdot u'_t(\vx_t)
     - \bigl(u'_t(\vx_t) - b_t(\vx_t)\bigr)^\top
       \nabla_{\vx_t} \log p'_t(\vx_t).
  \label{eq: final_p_prime}
\end{align}

Then we plug the results from \cref{eq: final_p} and \cref{eq: final_p_prime} into \cref{eq: log_ratio_expanded_app} to get:
\begin{equation}\label{eq: dynamical_ratio_app}
\begin{aligned}
\frac{\mathrm{d}}{\mathrm{d}t} \log r_t(\vx_t)
    &= \nabla_{\vx_t} \cdot \bigl(u'_t(\vx_t) - u_t(\vx_t)\bigr) \\ & + \bigl(b_t(\vx_t)-u_t(\vx_t)\bigr)^\top
\, \nabla_{\vx_t} \log p_t(\vx_t) + \bigl(u'_t(\vx_t) - 
b_t(\vx_t)\bigr)^\top
\, \nabla_{\vx_t} \log p'_t(\vx_t).
\end{aligned}
\end{equation}

\subsection{Remarks on \cref{prop:log-ratio-ode}}

We provide an interpretation of the effect of the individual terms in \cref{eq: dynamical_ratio_app} on the density ratio as follows:

(I) $r_t(\vx_t)$ increases when the field $u_t$ concentrates probability mass more strongly around $\vx_t$ than $u'_t$, as quantified by $\nabla_{\vx_t}\cdot(u'_t(\vx_t) - u_t(\vx_t))$. This suggests that the mass aggregates more at the numerator than at the denominator of the ratio. 

(II) $\bigl(b_t(\vx_t) - u_t(\vx_t)\bigr)^\top
\, \nabla_{\vx_t} \log p_t(\vx_t)$ and $\bigl(u'_t(\vx_t) - b_t(\vx_t)\bigr)^\top
\, \nabla_{\vx_t} \log p'_t(\vx_t)$ suggest that: 
\begin{itemize}
[topsep=0pt,itemsep=0pt,partopsep=0pt,parsep=0.0cm,leftmargin=20pt]
    \item $r_t(\vx_t)$ decreases when $p'_t$ grows faster along its generated field $u'_t$ than the simulating field $b_t$. 
    \item $r_t(\vx_t)$ increases when $p_t$ grows faster along its generated field $u_t$ than the simulating field $b_t$. 
\end{itemize}
In other words, these terms can be interpreted as \textit{correction terms} that account for the fact that the densities in the ratio are tracked along trajectories that do not follow their own generating vector fields. Either term vanishes when the simulation field $b_t$ coincides with $u_t$ or $u_t'$.

\subsection{Integral Solution of the Ratio}\label{sec: integral_solution_ratio}
The explicit formulation of the $\log$-likelihood ratios between conditional densities when the simulating field is also the generating field of the numerator (\textbf{S1} setting) is:

\begin{align}\label{eq: final_conditional_ratio}
\log \,  &r_1^{\theta}(\vx_1 \, | \, \vy, \vy') = 
-\int_1^0 [ \nabla_{\vx_t} \! \cdot \! (u_t^{\theta}(\vx_t \, | \, \vy')-u_t^{\theta}(\vx_t \, | \, \vy))  \nonumber\\
&+ (u_t^\theta(\vx_t \, | \, \vy')-u_t^\theta(\vx_t \, | \, \vy))^\top\nabla_{\vx_t}\log p_t^\theta(\vx_t \, | \, \vy')]\mathrm{d}t \: \nonumber,\\ 
&\mathrm{with} \quad \vx_t = \vx_1 + \int_1^t u_s^\theta(\vx_s \, | \, \vy)\mathrm{d}s \: . 
\end{align}

Under \textbf{S2}, we provide the same formulation for S2 when the simulating vector field $b_t(\vx_t)$ is the velocity $u_t^\theta(\vx_t)$ generating the unconditional density $p_t^\theta(\vx_t)$:

\begin{align}\label{eq: simulation_dynamic_ratio_s2}
\log \,  r_1^{\theta}(\vx_1 \, | \, \vy, \vy') = 
-\int_1^0 [ &\nabla_{\vx_t} \! \cdot \! (u_t^{\theta}(\vx_t \, | \, \vy')-u_t^{\theta}(\vx_t \, | \, \vy))  \nonumber \\
&+ (u_t^\theta(\vx_t \, | \, \vy')-u_t^\theta(\vx_t))^\top\nabla_{\vx_t}\log p_t^\theta(\vx_t \, | \, \vy')\\
&+ (u_t^\theta(\vx_t)-u_t^\theta(\vx_t \, | \, \vy))^\top\nabla_{\vx_t}\log p_t^\theta(\vx_t \, | \, \vy)]\mathrm{d}t \:  \nonumber\\ 
\mathrm{with} \quad \vx_t &= \vx_1 + \int_1^t u_s^\theta(\vx_s)\mathrm{d}s \: \nonumber. 
\end{align}
Note that, different from \cref{eq: final_conditional_ratio}, none of the terms disappear, similar to the general setting described in \cref{prop:log-ratio-ode}.

As explained in \cref{sec: applications}, $u_t^\theta(\vx_t)$ follows the same parameterization as the conditional counterpart and can be achieved during training by replacing the conditioner with an empty token $\varnothing$ with a certain probability $\beta$. 

\textbf{S1 vs. S2 choice (\cref{sec: simulation_based_lik_ratio}).} We use S1 across the whole paper, except for the MI experiment in \cref{sec: mi_estimation}, where S2 worked significantly better in a high-dimensional setting (especially for $d=320$). We recommend working with S2 in the presence of a lower overlap between the conditional densities in the ratio and a higher dimensionality, while prioritizing S1 in all other cases for efficiency (see \cref{prop: lower_bound_kl} and 
\cref{sec: lower_bound_kl}). 

\subsection{Parameterization error analysis}\label{sec: error_analysis}
We present a first-order error analysis derived by approximating both the vector field and the score as neural networks. For simplicity, we consider the setting (\textbf{S1}) from \cref{sec: simulation_based_lik_ratio}.

\paragraph{The general case.} Here, we do not assume a distributional form of the data distribution. 
\begin{proposition}
Let $u_t, u'_t$ be the true vector fields for the numerator and denominator flows, and let $\hat{u}_t, \hat{u}'_t$ be their neural approximations. Moreover, let $\nabla\log p'_t$ denote the true score of the denominator density and $\hat{s}_t$ its approximation. Define the approximation errors
\begin{align}
\delta_u &= \hat{u}_t - u_t, \\
\delta_{u'} &= \hat{u}'_t - u'_t, \\
\delta_s &= \hat{s}_t - \nabla \log p'_t.
\end{align}
Given $\epsilon_t = \log \hat{r}_t - \log r_t$ as the error between the true and approximated $\log$-likelihood ratios evaluated along the same trajectory, with $\epsilon_0 = 0$, and assume $\|u_t - u'_t\| \leq L_u$. Then, the first-order approximation of the final error satisfies
\begin{equation}
|\epsilon_1| \lesssim \int_0^1 \left( |\nabla \cdot (\delta_{u'} - \delta_u)| 
+ \|\delta_{u'} - \delta_u\| \cdot \|\nabla \log p'_t\| 
+ L_u \|\delta_s\| \right) \mathrm{d}t.
\end{equation}
\end{proposition}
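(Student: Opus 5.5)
Under S1 the simulating field coincides with the numerator field, so by \cref{prop:log-ratio-ode} (see also \cref{eq: final_conditional_ratio}) the true log-ratio obeys
\begin{equation*}
\frac{\mathrm{d}}{\mathrm{d}t}\log r_t = \nabla\cdot(u'_t - u_t) + (u'_t - u_t)^\top \nabla\log p'_t ,
\end{equation*}
while the estimator integrates the same expression with every ingredient replaced by its neural counterpart:
\begin{equation*}
\frac{\mathrm{d}}{\mathrm{d}t}\log \hat r_t = \nabla\cdot(\hat u'_t - \hat u_t) + (\hat u'_t - \hat u_t)^\top \hat s_t .
\end{equation*}
As in the statement, both expressions are evaluated along the same trajectory $\vx_t$. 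We therefore ignore the discrepancy between the trajectories driven by $u_t$ and by $\hat u_t$; that would be a separate second-order or stability effect, and excluding it is consistent with the first-order framing.

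The plan is to subtract the two right-hand sides to obtain an ODE for $\epsilon_t$, expand to first order in the errors, and then integrate from $0$ to $1$ using $\epsilon_0=0$. The divergence part is linear in the fields, so its difference is exactly $\nabla\cdot(\delta_{u'}-\delta_u)$. For the bilinear part, write $\hat u'_t-\hat u_t = (u'_t-u_t)+(\delta_{u'}-\delta_u)$ and $\hat s_t=\nabla\log p'_t+\delta_s$. Expanding the product gives
\begin{equation*}
(\delta_{u'}-\delta_u)^\top\nabla\log p'_t + (u'_t-u_t)^\top\delta_s + (\delta_{u'}-\delta_u)^\top\delta_s .
\end{equation*}
The last term is a product of two errors, so it is second order and is dropped; this is the step that turns the bound into a ``$\lesssim$''.

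This leaves, to first order,
\begin{equation*}
\dot\epsilon_t \approx \nabla\cdot(\delta_{u'}-\delta_u) + (\delta_{u'}-\delta_u)^\top\nabla\log p'_t + (u'_t-u_t)^\top\delta_s .
\end{equation*}
Integrating and using $\epsilon_0=0$ gives $\epsilon_1=\int_0^1\dot\epsilon_t\,\mathrm{d}t$. Taking absolute values, I would apply the triangle inequality to the integral and then to the three summands, and bound each inner product by Cauchy--Schwarz. The third term is then controlled by $\|u_t-u'_t\|\,\|\delta_s\|\le L_u\|\delta_s\|$, which produces exactly the stated integrand.

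The only point needing care is justifying the neglect of the cross term $(\delta_{u'}-\delta_u)^\top\delta_s$ and of the trajectory mismatch. I would handle this by stating explicitly that both are quadratic in the approximation errors and hence fall outside the first-order bound. Everything else is elementary.
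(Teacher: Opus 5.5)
Your proposal is correct and matches the paper's proof step for step: the same S1 dynamics for $\log r_t$ and $\log\hat r_t$ and the same substitution. Like the paper, you drop the cross term $(\delta_{u'}-\delta_u)^\top\delta_s$ and then apply the triangle inequality, Cauchy--Schwarz, and $\|u'_t-u_t\|\le L_u$.

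One correction concerns your side remark. The mismatch between the $u_t$- and $\hat u_t$-driven trajectories is \emph{not} quadratic in the errors. Applying \cref{prop:log-ratio-ode} with $b_t=\hat u_t$, the true log-ratio along the simulated path acquires the extra term $\delta_u^\top\nabla_{\vx_t}\log r_t$, which is first order in $\delta_u$. So exclude the mismatch only by invoking the statement's ``evaluated along the same trajectory'' hypothesis, as the paper implicitly does, rather than by a second-order argument.
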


\begin{proof}
Under \textbf{S1}, the time derivatives of the true and approximate log-ratios are given by
\begin{align}
\frac{\text{d}}{\text{d}t} \log r_t &= \nabla \cdot (u'_t - u_t) + (u'_t - u_t)^\top \nabla \log p'_t, \\
\frac{\text{d}}{\text{d}t} \log \hat{r}_t &= \nabla \cdot (\hat{u}'_t - \hat{u}_t) + (\hat{u}'_t - \hat{u}_t)^\top \hat{s}_t.
\end{align}

Substituting $\hat{u}_t = u_t + \delta_u$, $\hat{u}'_t = u'_t + \delta_{u'}$, and $\hat{s}_t = \nabla \log p'_t + \delta_s$, we obtain
\begin{align}
\frac{\text{d}}{\text{d}t} \log \hat{r}_t 
&= \nabla \cdot \big((u'_t + \delta_{u'}) - (u_t + \delta_u)\big) \\
&\quad + \big((u'_t + \delta_{u'}) - (u_t + \delta_u)\big)^\top (\nabla \log p'_t + \delta_s).
\end{align}

Define the error derivative
\begin{equation}
\frac{\text{d}}{\text{d}t} \epsilon_t = \frac{\text{d}}{\text{d}t} \log \hat{r}_t - \frac{\text{d}}{\text{d}t} \log r_t.
\end{equation}
After the cancellation of shared terms, this yields
\begin{align}
\frac{\text{d}}{\text{d}t} \epsilon_t 
&= \nabla \cdot (\delta_{u'} - \delta_u) 
+ (\delta_{u'} - \delta_u)^\top \nabla \log p'_t \\
&\quad + (u'_t - u_t)^\top \delta_s 
+ (\delta_{u'} - \delta_u)^\top \delta_s.
\end{align}

Using a first-order error approximation, we drop the second-order term 
$(\delta_{u'} - \delta_u)^\top \delta_s$. Taking absolute values and applying the triangle inequality,
\begin{align}
|\epsilon_1| 
= \left| \int_0^1 \frac{\text{d}}{\text{d}t} \epsilon_t \, dt \right| 
\leq \int_0^1 \left| \frac{\text{d}}{\text{d}t}\epsilon_t \right| \text{d}t.
\end{align}

Applying Cauchy--Schwarz on the first-order error approximation,
\begin{equation}\label{eq: general_bound}
|\epsilon_1| 
\lesssim \int_0^1 \Big(
|\nabla \cdot (\delta_{u'} - \delta_u)| 
+ \|\delta_{u'} - \delta_u\| \cdot \|\nabla \log p'_t\| + \|u'_t - u_t\| \cdot \|\delta_s\|
\Big) \text{d}t.
\end{equation}

Finally, using $\|u'_t - u_t\| \leq L_u$ yields the result.
\end{proof}

\textbf{The Gaussian Case.} Here, we show how the above derivation translates to the Gaussian simulation setting.  

\begin{proposition}
Consider the ratio $\log r_t = \log \frac{p_1}{p'_1}$ where
\begin{align}
p_0 = p'_0 = \mathcal{N}(\mathbf{0}_d, \mathbb{I}_d),  \quad p_1 = \mathcal{N}(c \cdot \mathbf{1}_d, \mathbb{I}_d), \quad p'_1 = \mathcal{N}(\mathbf{0}_d, \mathbb{I}_d).
\end{align}

Assume a conditional probability path with $\sigma_t = 1 - t$ and $\alpha_t = t$. Then the expected log-ratio error satisfies
\begin{equation}
\mathbb{E}[|\epsilon_1|] 
\lesssim \int_0^1 \Big(
\mathbb{E}[|\nabla \cdot (\delta_{u'} - \delta_u)|] 
+ \frac{\sqrt{d}}{\tilde{\sigma}_t} \, \mathbb{E}[\|\delta_{u'} - \delta_u\|] + c \, \frac{\sqrt{d}(1 - t)}{\tilde{\sigma}_t^2} \, \mathbb{E}[\|\delta_s\|]
\Big) \text{d}t,
\end{equation}

where $\tilde{\sigma}_t$ is the marginal variance induced by the flow and $d$ the data dimensionality. In particular, the velocity and score error terms scale as $\mathcal{O}(\sqrt{d})$.
\end{proposition}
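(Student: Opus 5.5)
We start from the general first-order bound \cref{eq: general_bound} of the preceding proposition, applied pathwise, and take expectations over the trajectory $\vx_t$ simulated under the numerator field (setting \textbf{S1}). By linearity of the integral and Fubini, $\mathbb{E}[|\epsilon_1|]$ is bounded by the time integral of the expectations of the three terms. The divergence term is kept as is. The second and third terms require explicit control of $\|\nabla\log p'_t\|$ and $\|u'_t-u_t\|$ in the Gaussian setting. Since these are deterministic functions of $\vx_t$, we will bound them in expectation or almost pointwise and decouple them from the network errors. We use Cauchy--Schwarz in expectation, or treat the true-field factor by its typical magnitude, which is consistent with the first-order ``$\lesssim$''.

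\textbf{Step 1: the marginal paths.} With $\alpha_t=t$, $\sigma_t=1-t$ and $q=\mathcal{N}(\mu,\mathbb{I}_d)$, the marginal is
\[
p_t=\mathcal{N}\bigl(t\mu,\tilde\sigma_t^2\mathbb{I}_d\bigr),\qquad \tilde\sigma_t^2=t^2+(1-t)^2 .
\]
We take $\mu=c\mathbf{1}_d$ for the numerator and $\mu=\mathbf{0}_d$ for the denominator. Hence
\[
\nabla\log p'_t(\vx)=-\vx/\tilde\sigma_t^2 .
\]

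\textbf{Step 2: the score factor.} Along the trajectory, $\vx_t$ has marginal $p_t$ (S1), so $\mathbb{E}\|\vx_t\|\lesssim\sqrt{d}\,\tilde\sigma_t$ up to the mean shift $tc\sqrt d$. This gives $\|\nabla\log p'_t\|\sim\sqrt d/\tilde\sigma_t$, yielding the second term.

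\textbf{Step 3: the velocity gap.} Using the marginal-field formula of \cref{sec: relationship_score_vf}, $u_t(\vx)=a_t\vx+b_t\nabla\log p_t(\vx)$, the difference of the two fields is $b_t$ times the difference of the scores. That score difference is $t\,c\mathbf{1}_d/\tilde\sigma_t^2$. With $b_t=(1-t)/t$ for this schedule, we get
\[
u_t-u'_t=\frac{c(1-t)}{\tilde\sigma_t^2}\mathbf{1}_d ,
\]
whose norm is $c\sqrt d(1-t)/\tilde\sigma_t^2$. This replaces $L_u$ by an explicit time-dependent constant and produces the third term.

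\textbf{Main obstacle.} The hardest part is justifying the factorization $\mathbb{E}[\|\delta\|\,\|\nabla\log p'_t\|]\lesssim(\sqrt d/\tilde\sigma_t)\,\mathbb{E}\|\delta\|$. The score norm is random, and the errors $\delta$ may correlate with $\vx_t$. The first approach would be concentration of $\|\vx_t\|$ around $\sqrt{d}\,\tilde\sigma_t$ (a chi distribution), which makes the factor effectively deterministic to leading order; this is admissible under the first-order convention. Alternatively, Cauchy--Schwarz gives the same scaling with second moments. The $\mathcal{O}(\sqrt d)$ claim then follows since $\tilde\sigma_t\ge 1/\sqrt2$ is bounded away from zero.
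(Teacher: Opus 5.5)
Your proposal is correct and follows the paper's proof essentially step for step. Both arguments use Gaussian marginals $p_t=\mathcal{N}(tc\mathbf{1}_d,\tilde\sigma_t^2\mathbb{I}_d)$ and $p'_t=\mathcal{N}(\mathbf{0}_d,\tilde\sigma_t^2\mathbb{I}_d)$, the denominator-score magnitude $\approx\sqrt d/\tilde\sigma_t$, the velocity gap $c\sqrt d(1-t)/\tilde\sigma_t^2$ from the score--field relation, and substitution into the general first-order bound.

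You are in fact more explicit than the paper. You compute $\tilde\sigma_t^2=t^2+(1-t)^2\ge 1/2$ and $b_t=(1-t)/t$. You also flag two steps the paper performs silently: the factorization of $\mathbb{E}[\|\delta_{u'}-\delta_u\|\,\|\nabla\log p'_t\|]$, and dropping the mean shift $tc\sqrt d$. Both are first-order heuristics absorbed into $\lesssim$, the latter only with a $c$-dependent constant.
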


\begin{proof}
Under Gaussian probability paths, the marginal distributions are also Gaussian:
\begin{align}
p_t &= \mathcal{N}(t\, c \cdot \mathbf{1}_d, \tilde{\sigma}_t^2 \mathbb{I}_d), \label{eq: marginal_num}\\
p'_t &= \mathcal{N}(\mathbf{0}_d, \tilde{\sigma}_t^2 \mathbb{I}_d), \label{eq: marginal_den}
\end{align}
where the marginal variance $\tilde{\sigma}_t$ is a function of $\alpha_t$ and $\sigma_t$.

The score of $p'_t$ is given by
\begin{equation}
\nabla \log p'_t(\vx_t) = -\frac{\vx_t}{\tilde{\sigma}_t^2},
\end{equation}
hence
\begin{equation}
\|\nabla \log p'_t(\vx_t)\| = \frac{1}{\tilde{\sigma}_t^2} \|\vx_t\|.
\end{equation}
Moreover,
\begin{equation}
\mathbb{E}[\|\nabla \log p'_t(\vx_t)\|] \approx \frac{\sqrt{d}}{\tilde{\sigma}_t}.
\end{equation}

Using the relationship between score and vector field in Gaussian flow matching and using \cref{eq: marginal_num} and \cref{eq: marginal_den}, we obtain
\begin{equation}
\|u'_t(\vx_t) - u_t(\vx_t)\| 
= \frac{1 - t}{\tilde{\sigma}_t^2} \|c \cdot \mathbf{1}_d\| 
= c \, \frac{\sqrt{d}(1 - t)}{\tilde{\sigma}_t^2}.
\end{equation}

Plugging these expressions into the general error bound in \cref{eq: general_bound} and taking expectations yields
\begin{equation}
\mathbb{E}[|\epsilon_1|] 
\lesssim \int_0^1 \Big(
\mathbb{E}[|\nabla \cdot (\delta_{u'} - \delta_u)|] 
+ \frac{\sqrt{d}}{\tilde{\sigma}_t} \mathbb{E}[\|\delta_{u'} - \delta_u\|] + c \frac{\sqrt{d}(1 - t)}{\tilde{\sigma}_t^2} \mathbb{E}[\|\delta_s\|]
\Big) \text{d}t.
\end{equation}
\end{proof}
\paragraph{Discussion.}
The bound decomposes the total error into three contributions: (I) divergence error, (II) velocity approximation error, and (III) score approximation error. The latter two terms scale explicitly with $\sqrt{d}$, showing that approximation errors in the vector field and score are amplified with dimensionality, while the divergence term depends on the structure of the approximation errors.
\subsection{Proof of \cref{prop: lower_bound_kl}}\label{sec: lower_bound_kl}
\begin{proof}
First, note that the expected log-ratio under $p^b_t$ admits the following identity:

\begin{align}
    \mathbb{E}_{p^b_t}\left[\log r_t(\vx_t)\right] 
    &= \mathbb{E}_{p^b_t}\left[\log \frac{p_t(\vx_t)}{p'_t(\vx_t)}\right] \notag \\
    &= \mathbb{E}_{p^b_t}\left[\log \frac{p^b_t(\vx_t)}{p'_t(\vx_t)}\right] - \mathbb{E}_{p^b_t}\left[\log \frac{p^b_t(\vx_t)}{p_t(\vx_t)}\right] \notag \\
    &= \mathrm{KL}(p^b_t \| p'_t) - \mathrm{KL}(p^b_t \| p_t) = \Delta\mathrm{KL}(t). \label{eq: kl_identity}
\end{align}

Since $p_0 = p'_0$, we have $\Delta\mathrm{KL}(0) = 0$. Applying Jensen's inequality 
($\mathbb{E}[X^2] \geq (\mathbb{E}[X])^2$) pointwise in $t$:

\begin{equation}
    \mathbb{E}_{p^b_t}\left[\left(\frac{\mathrm{d}}{\mathrm{d}t}\log r_t(\vx_t)\right)^2\right] 
    \geq \left(\mathbb{E}_{p^b_t}\left[\frac{\mathrm{d}}{\mathrm{d}t}\log r_t(\vx_t)\right]\right)^2.
\end{equation}

Integrating both sides over $[0,1]$:

\begin{equation}
    \int_0^1 \mathbb{E}_{p^b_t}\left[\left(\frac{\mathrm{d}}{\mathrm{d}t}\log r_t(\vx_t)\right)^2\right]\mathrm{d}t 
    \geq \int_0^1 \left(\mathbb{E}_{p^b_t}\left[\frac{\mathrm{d}}{\mathrm{d}t}\log r_t(\vx_t)\right]\right)^2\mathrm{d}t.
\end{equation}

Under the regularity conditions allowing the exchange of derivatives and expectations:

\begin{equation}
    \int_0^1 \left(\mathbb{E}_{p^b_t}\left[\frac{\mathrm{d}}{\mathrm{d}t}\log r_t(\vx_t)\right]\right)^2\mathrm{d}t
    = \int_0^1 \left(\frac{\mathrm{d}}{\mathrm{d}t}\mathbb{E}_{p^b_t}\left[\log r_t(\vx_t)\right]\right)^2\mathrm{d}t
    = \int_0^1 \left(\frac{\mathrm{d}}{\mathrm{d}t}\Delta\mathrm{KL}(t)\right)^2\mathrm{d}t,
\end{equation}

where the last equality uses \cref{eq: kl_identity}. Applying the Cauchy-Schwarz inequality:

\begin{equation}
    \int_0^1 \left(\frac{\mathrm{d}}{\mathrm{d}t}\Delta\mathrm{KL}(t)\right)^2\mathrm{d}t 
    \geq \left[\int_0^1 \frac{\mathrm{d}}{\mathrm{d}t}\Delta\mathrm{KL}(t)\,\mathrm{d}t\right]^2
    = \left[\Delta\mathrm{KL}(1) - \Delta\mathrm{KL}(0)\right]^2
    = \left(\Delta\mathrm{KL}(1)\right)^2,
\end{equation}

where the final equality uses $\Delta\mathrm{KL}(0) = 0$. Chaining the inequalities completes the proof.
\end{proof}

\section{Additional details}
\subsection{Batch Correction Task}\label{sec: batch_correction_task_appendix}

Evaluation of batch correction in single-cell RNA sequencing typically relies on neighborhood-based analyses. A common protocol consists of constructing local neighborhoods using k-nearest neighbors (kNN) before and after correction, followed by measuring changes in batch label purity within these neighborhoods \citep{luecken2022benchmarking}.

Our approach follows the same underlying principle of assessing batch mixing, but replaces heuristic purity measures with a likelihood-based criterion. Specifically, we evaluate whether corrected cell embeddings are more likely under a distribution conditioned jointly on batch and cell type labels than under a distribution conditioned on cell type alone. This formulation provides a principled proxy for batch mixing that is directly comparable across correction methods, while leveraging the expressiveness of exact-likelihood generative models.

\subsection{Drug Combination Analysis as Differential Abundance}

We frame the drug combination experiment as a differential abundance (DA) problem, where the reference distribution is conditioned on a single-drug treatment rather than a control condition. In single-cell analysis, perturbation effects are commonly studied through likelihood-based formulations, including approaches based on kNN graphs \citep{dann2022differential}, manifold learning \citep{burkhardt2021quantifying}, and variational autoencoders \citep{boyeau2025deep}. These methods have also been extended to combinatorial perturbation settings \citep{otto2025comparing}.

Perturbations primarily induce shifts in the distribution of cellular states, altering their relative frequencies across the data manifold. Consequently, the objective is to quantify changes in probability mass rather than to perform classification. While classifiers can provide rankings of perturbations, they do not directly estimate differential abundance.

\subsection{Literature Positioning of Flow-Based Likelihood Ratio Estimation}
\paragraph{Classification-based Ratio Estimation.}
A common strategy for estimating density ratios relies on classification-based approaches. These methods reduce ratio estimation to a binary classification problem. While simple and widely applicable, their performance degrades in high-dimensional settings when the supports of the underlying distributions differ. In such cases, the classifier tends to saturate, resulting in poor ratio estimates.

In contrast, our approach models likelihoods directly using normalizing flows, enabling explicit ratio estimation. This generative formulation improves robustness to increasing dimensionality. However, as discussed in the limitations, the lack of shared support between distributions can still affect performance.

\paragraph{Moment Matching and Kernel-based Methods.}
Moment matching and kernel-based approaches, such as KLIEP \citep{savkin2020kliep}, estimate density ratios without explicitly parameterizing the underlying densities. Instead, they rely on importance reweighting to match moments or minimize divergence measures. Although these methods are theoretically well-founded, they typically scale poorly to high-dimensional data and are sensitive to the choice of kernel.

Our method instead learns densities via a flow-based model while tracking likelihood ratios during simulation, yielding a more flexible neural estimator that avoids explicit kernel design.

\paragraph{Score-based Methods.}
Score-based approaches, including TSM \citep{choi2022density} and CTSM \citep{yu2025density}, estimate density ratios by defining interpolation paths between distributions and integrating the corresponding time-dependent score function. In particular, CTSM derives a closed-form expression for the ratio under a flow matching formulation.

In contrast, our approach uses ordinary differential equations to transform prior noise into target densities without assuming a predefined interpolation path between distributions. Once a conditional continuous normalizing flow is trained, density ratios for arbitrary conditioning pairs can be computed without retraining. This differs from TSM and CTSM, which typically require retraining for each pair of distributions.

\subsection{Failure Detection}\label{sec:failure_detection}

\paragraph{Failure Scenarios.} As observed in the discussion section, scRatio may struggle to effectively estimate the likelihood ratio between densities with non-overlapping supports. This is due to the fact that one of the densities is tracked using a non-simulating field, which, in turn, increases the estimation error. We propose two approaches to identify such failure scenarios with little to no additional computational overhead.

\paragraph{Histogram Inspection.}
When comparing distributions $p(\cdot \mid y)$ and $p(\cdot \mid y^\prime)$ with low shared support, the ratio $\log \frac{p(\cdot \mid y)}{p(\cdot \mid y)}$ will explode in either direction. The arguably simplest way to diagnose this eventuality is to visualize the histograms of the log ratios of points under $p(\cdot \mid y)$ and under $p(\cdot \mid y^\prime)$. The overlap of such histograms will indicate similar weighting of some manifold regions. Non-overlapping histograms would signal a low shared support between the considered densities.

\paragraph{Density of Pulled-Back Noise under Prior.}
We note that, when estimating density ratios between densities $p(\cdot \mid y)$ and $p(\cdot \mid y^\prime)$, one simulates trajectories from data to noise backward in time. An alternative method to uncover the lack of shared support between $p(\cdot \mid y)$ and $p(\cdot \mid y^\prime)$ is based on the observation that, for overlapping distributions, the pulled-back noise of points under $p(\cdot \mid y^\prime)$, obtained by estimating the likelihood using $u_t(\cdot \mid y)$ as the simulating field, should distribute as the (tractable) noise distribution $p_0(\cdot)$. This can be verified by evaluating any divergence or likelihood measure against such a tractable noise distribution.

\newpage

\section{Experimental Setup}
\subsection{Parameterizations}
As explained in \cref{sec: estimation_of_score}, we parameterize the score and the vector field with neural networks. Our model consists of the following units:
\begin{enumerate}
[topsep=0pt,itemsep=0pt,partopsep=0pt,parsep=0.0cm,leftmargin=20pt]
    \item \textbf{Encoder:} A fully connected neural network that encodes observations into a latent space. The number of latent dimensions is a hyperparameter.
    \item \textbf{Condition representation network:} A neural network that takes one-hot-encoded conditions as input and returns a dense representation.
    \item \textbf{Time encoder:} A standard sinusoidal time encoder \citep{vaswani2017attention} followed by a fully-connected representation module. 
    \item \textbf{Separate score and vector field head:} Two fully connected neural networks inputting a concatenation between latent state, time, and condition encodings and outputting predictions for the marginal score and vector fields (see \cref{tab: score_flow_closed_form_marginal} for the conditional objectives used for the optimization).
\end{enumerate}
For all modules, we use the SELU activation function. Moreover, we fix the architecture of both the score and vector field heads to 1024 neurons for three layers.  

\subsection{Gaussian Simulation Experiment}\label{sec: gaussian_simulation_experiment_app}
\textbf{Task.} In the first synthetic experiment, we follow \citet{yu2025density} and evaluate the model on estimating closed-form $\log$-likelihood ratios on multivariate Gaussian distributions. We consider the following tractable ratio:
\begin{align*}
    & \log r(\vx) = \log\frac{q(\vx)}{q'(\vx)} \, , \\
     q = \mathcal{N}(s &\mathbbm{1}_d, \mathbb{I}_d) \quad q' = \mathcal{N}(\bm{0}_d, \mathbb{I}_d)  \: , 
\end{align*} 
where $\mathbbm{1}_d$ is the $d$-dimensional all-ones vector,  $s\in \{1, 2\}$ a scalar defining different datasets, and $\vx\in\mathbb{R}^d$. For each value of $s$, we consider the model's performance across $d\in\{2, 5, 10, 20, 30\}$.

\textbf{Choices for scRatio.} We use a condition-aware flow matching model $\{p_t^\theta(\vx_t \mid y)\}_{t\in[0,1]}$, with parameterized vector field $\{u_t^\theta(\vx_t \mid y)\}_{t\in[0,1]}$, to implement scRatio. Here, $p_t^\theta(\cdot \mid y=0)$ and $p_t^\theta(\cdot \mid y=1)$ approximate, respectively, $q'$ and $q$. After training the flow model, we simulate the ratio using \cref{prop:log-ratio-ode} as a simulation field while tracking the ratio. As a dataset, we sample 100,000 points from the two distributions and split them into $90\%$ training and $10\%$ test sets. Models are trained with a learning rate of 1e-4 across 100,000 steps. The results are averaged across 5 repetitions.

\textbf{scRatio Parameter Sweep.} As presented in \cref{sec: simulated_gaussians}, we evaluate our approach across different schedulers. The results shown in \cref{fig: fig_mse} are based on, for each scheduler, the configuration that yielded the best test-set performance. Alongside the scheduling hyperparameters, we also consider model size and encoding mechanisms, and provide the grid of tested hyperparameters in \cref{tab:hparams_gaussians}.

\begin{table}[H]
\caption{The hyperparameter grid explored for the Gaussian simulation dataset. Each model with a different configuration of hyperparameters is compared based on the test-set ratio estimation performance measured as MSE. }
\label{tab:hparams_gaussians}
\centering
\begin{tabular}{cc}
\hline
Hyperparameters                & Values                       \\ \hline
Latent space size              & 32, 64, 128, 256, 512      \\
$\lambda$                      & 0, 0.1, 0.25, 0.5, 0.75, 1 \\
$\sigma_{\mathrm{min}}$        & 0, 1e-4, 1e-3, 0.01, 0.1   \\
Use sinusoidal time embedding  & True, False                \\
Sinusoidal time embedding size & 32, 64, 128                \\
Condition embedding size       & 32, 64, 128                \\ \hline
\end{tabular}
\end{table}

\textbf{Baselines.} For this experiment, we compare scRatio with three baselines. 

\begin{itemize}
[topsep=0pt,itemsep=5pt,partopsep=0pt,parsep=0.0cm,leftmargin=20pt]
    \item \textbf{Naive approach:} In the naive approach, instead of estimating the ratio according to \cref{prop:log-ratio-ode}, we separately evaluate the numerator and denominator likelihoods. Notably, the difference between scRatio and the naive approach is only at inference time. Therefore, in our results, we use the same trained conditional model to evaluate both scRatio and the naive solution to exclude any performance differences grounded in neural network optimization.
    \item \textbf{Time Score Matching (TSM)} \citep{choi2022density}: TSM relies on probability paths between pairs of densities. For convenience, given a point $\vx\in\mathbb{R}^d$, we restate the task as the estimation of the following ratio between pairs of probability distributions:
    $$
    \log r(\vx) = \log \frac{p_1(\vx)}{p_0(\vx)} \, ,
    $$
    where $p_0(\vx)$ and $p_1(\vx)$ are two densities over $\mathbb{R}^d$. Let $\{p_t(\vx_t)\}_{t\in[0,1]}$ be a time-resolved interpolant between $p_0(\vx)$ and $p_1(\vx)$. One can express the ratio between densities as follows:
    $$
    \log r(\vx) = \log \frac{p_1(\vx)}{p_0(\vx)} = \int_0^1 \frac{\partial}{\partial t}\log p_t(\vx)\mathrm{d}t \, .
    $$
    Hence, the goal of TSM is to parameterize the time-score $\frac{\partial}{\partial t}\log p_t(\vx)\mathrm{d}t$ with a neural network $s_t^\psi(\vx)$ and integrate it through time. More specifically, \citet{choi2022density} use the following simulation-free objective relying on integration by parts:
    $$
    \mathcal{L}_{\mathrm{TSM}}(\psi) = 2\,\mathbb{E}_{p_0(\vx)}\left[s_0^\psi(\vx)\right] - 2\,\mathbb{E}_{p_1(\vx)}\left[s_1^\psi(\vx)\right] + \mathbb{E}_{p_t(\vx)}\left[ 2\,\frac{\partial}{\partial t} s_t^\psi(\vx) + 2\,\dot{\lambda}(t)\, s_t^\psi(\vx) + \lambda(t)\, s_t^\psi(\vx)^2
\right] \, , 
    $$
    with $\lambda(t)$ representing a time-dependent weighting function. 
    
    \item \textbf{Conditional Time Score Matching (CTSM)} \citep{yu2025density}: CTSM borrows concepts from flow matching \citep{albergo2023building, lipman2023flow, tong2024improving} to define a TSM objective conditioned on the data points. By defining a tractable interpolation between samples from $p_0$ and $p_1$, the time-score of the conditional $p_t$ is available in closed form and can be regressed against. In analogy with flow matching, regressing the conditional version of the CTSM objective corresponds to learning the marginal TSM in expectation. Moreover, the authors offer a vectorized variant (CTSMv), where they split the optimization problem across the $d$ dimensions.
\end{itemize}

\textbf{Choice of Sample-based Ratio Estimation Variants for TSM and CTSM.} In the simulation experiments described in \cref{sec: simulated_gaussians} and \cref{sec: mi_estimation}, one of the two densities in the ratio is a closed-form standard normal. This opens the door to assuming tractability of $p_0$ and using closed-form probabilistic interpolants relying exclusively on samples from $p_1$. 

For a fair comparison with scRatio, which does not assume any distributional form for either the numerator or denominator of the density ratio, we focus on the CTSM and TSM model variants with Schrödinger Bridge (SB) probability paths. In essence, instead of relying on schedules interpolating between the data and a standard normal distribution, we define paths between samples from both distributions as stochastic interpolants. Let $\vx_0\sim p_0(\vx_0)$ and $\vx_1\sim p_1(\vx_1)$ be samples from the two distributions whose ratio we seek to estimate. The SB path is: 
$$
 \vx_t = t \vx_1 + (1-t)\vx_0 + \sigma\sqrt{t(1-t)}\epsilon \, ,
$$
where $\epsilon\sim\mathcal{N}(\bm{0}_d, \mathbb{I}_d)$. Note that, with this formulation, $p_0$ can be any distribution for which samples are available, allowing for data-driven likelihood ratio estimation similar to scRatio.

\textbf{TSM and CTSM parameter sweep.} To ensure a fair comparison against these baselines, we conducted a grid search over the optimization learning rate and the path variance of the Schrödinger bridge version of TSM and CTSM. In particular, we set the learning rate to $\{5\cdot10^{-4}, 10^{-3}, 5\cdot10^{-3} \}$ and the path variance to $\{0.1, 0.5, 1.0\}$. We repeated each experiment 5 times, selected the best configuration from the average MSE from the validation set, and eventually reported the same quantity evaluated on a test set, shared across all compared methods.

\subsection{Mutual Information Estimation Experiment}\label{sec: mi_experiment_app}
\textbf{Task.} In this experiment, we use density ratios as a proxy for Mutual Information (MI) estimation. More specifically, we consider the following two Gaussian distributions:
$$
 \quad q = \mathcal{N}(\mathbf{0}_d, \bm{\Sigma}) \, ,  q' = \mathcal{N}(\mathbf{0}_d, \mathbb{I}_d) \, , 
$$
where $d$ is an even number of dimensions and $\bm{\Sigma}$ is a block-diagonal matrix of $2 \times 2$ blocks with 1's on the diagonals and 0.8 in the off-diagonal entries. In other words, each odd dimension $k$ is correlated with the even dimension $k+1$ and uncorrelated with the others.

Given this structured setup and an observation $\vx \sim q(\vx)$, one can show that the expectation of the $\log$ density ratio between $q$ and $q'$ corresponds to the MI between the vectors $\vx^{\mathrm{{odd}}}=(x^1, x^3, ..., x^{d-1})$ and $\vx^{\mathrm{{even}}}=(x^2, x^4, ..., x^{d})$.

\textbf{Relationship between MI and density ratio.} Consider the two multivariate random variables $X^{\mathrm{even}}$ and $X^{\mathrm{odd}}$ representing the even and odd dimensions of $X$, respectively. Individually, $X^{\mathrm{even}}$ and $X^{\mathrm{odd}}$ are multivariate standard normal as:
\begin{itemize}
[topsep=0pt,itemsep=5pt,partopsep=0pt,parsep=0.0cm,leftmargin=20pt]
    \item Odd dimensions are correlated only with even dimensions (and vice versa) and uncorrelated with each other.
    \item The individual dimensions of a multivariate Gaussian random variable are also normally distributed. 
\end{itemize}
In other words, the following is true:
\begin{equation}
    q(X^{\mathrm{even}}) =  q(X^{\mathrm{odd}}) = \mathcal{N}(\mathbf{0}_{\frac{d}{2}}, \mathbb{I}_{\frac{d}{2}}), \quad q(X^{\mathrm{even}}, X^{\mathrm{odd}}) = \mathcal{N}(\mathbf{0}_d, \bm{\Sigma}) \, .
\end{equation}
Now, we write the MI as a Kullback-Leibler (KL) divergence:

\begin{equation}\label{eq: mi_standard_gauss}
\begin{aligned}
I(X^{\mathrm{even}}, X^{\mathrm{odd}}) &= \mathbb{E}_{q(X^{\mathrm{even}}, X^{\mathrm{odd}})}\left[\log\frac{q(X^{\mathrm{even}}, X^{\mathrm{odd}})}{q(X^{\mathrm{even}})q(X^{\mathrm{odd}})} \right] \: \\ 
&= \mathbb{E}_{q(X^{\mathrm{even}}, X^{\mathrm{odd}})}\left[\log\frac{q(X)}{q'(X)} \right] \: ,
\end{aligned}
\end{equation}

where the last step follows from the fact that $X^{\mathrm{even}}$ and $X^{\mathrm{odd}}$ are independent multivariate standard normal random variables and their concatenation
$X=(X^{\mathrm{even}},X^{\mathrm{odd}})$ follows a standard multivariate normal distribution, i.e.,
$q'(X)=\mathcal{N}(\mathbf{0}_d,\mathbb{I}_d)$. In other words, considering a Monte Carlo approximation of the MI in \cref{eq: mi_standard_gauss}, we can estimate the MI by averaging an estimate of the $\log$-likelihood ratio between distributions over samples from $q$. 

\textbf{Ground truth MI.} By the properties of the multivariate normal distribution, the ground truth MI of \cref{eq: mi_standard_gauss} is available in closed form as follows:
$$
I(X^{\mathrm{even}}, X^{\mathrm{odd}}) = \frac{1}{2}\log \left(\frac{1}{|\bm{\Sigma}|} \right) \, , 
$$
where we indicate with $|\bm{\Sigma}|$ the determinant of the covariance matrix of $q$. Since $\bm{\Sigma}$ is a block diagonal, we have:
$$
|\bm{\Sigma}| = |\bm{\Sigma}^{(b)}|^{\frac{d}{2}} \, ,
$$
where $\bm{\Sigma}^{(b)}$ is a single block. Hence, the actual MI depends on the dimensionality of the data (see \cref{tab: real_mi} for the ground truth values). Here, we use a rounded version of the real MI as a ground truth for our experiments, similar to prior work.
\begin{table}[H]
\caption{The real MI values for different data dimensions.}
\label{tab: real_mi} 
\centering
\begin{tabular}{cc}
\hline
$d$ & MI    \\ \hline
20  & 5 \\
40  & 10 \\
80  & 20 \\
160 & 40 \\
320 & 80 \\ \hline
\end{tabular}
\end{table}
\textbf{Dataset and Evaluation.} We consider a dataset of 100,000 data points drawn independently from both distributions. Out of these, we leave out 10,000 samples from $q$ as a test set. The models are evaluated based on the Mean Absolute Error (MAE) between the average $\log$-likelihood ratio estimated on the samples from $q$ and the ground truth MI. 

\textbf{Choices for scRatio.} We train scRatio as in \cref{sec: gaussian_simulation_experiment_app}, considering a smaller grid for hyperparameter tuning that focuses on scheduling (see \cref{tab: hparams_mi}). Additionally, in this experiment, we leverage S2 from \cref{sec: simulation_based_lik_ratio} and set $b_t(\vx)$ from \cref{prop:log-ratio-ode} to $b_t(\vx):=u_t^\theta(\vx_t)$, namely the field generating the unconditional distribution $p_t^\theta(\vx_t)$, as we empirically find that it improves the results. 

\begin{table}[H]
\caption{The hyperparameter grid explored for the MI estimation experiment.}
\centering
\label{tab: hparams_mi}
\begin{tabular}{cc}
\hline
Hyperparameters         & Values                    \\ \hline
$\lambda$               & 0, 0.1, 0.25, 0.5 \\
$\sigma_{\mathrm{min}}$ & 0, 0.01, 0.1            \\ \hline
\end{tabular}
\end{table}

\textbf{Baselines.} We choose TSM and CTSM with SB paths and optimize them as in \cref{sec: gaussian_simulation_experiment_app}. Additionally, we add a simple Multi-Layer Perceptron (MLP) classifier baseline, where the likelihood ratio is approximated by the $\log$-odds of a classifier. While the $\log$-odds do not normally correspond to exact likelihood ratios, we argue that they represent a reasonable approximation in this setting, as the numerator and denominator distributions are equally sampled.

\subsection{Differential Abundance Experiment}\label{sec: da_experiment_app}

\textbf{Task.} In this experiment, we show how to use scRatio for the Differential Abundance (DA) estimation task. DA is a standard analysis in single-cell perturbation studies that estimates whether a cellular state is significantly enriched in a condition (e.g., a donor or a perturbation) compared to another. 

\textbf{Dataset.} We provide a description of the data in \cref{sec: da_analysis}. In essence, we create a synthetic version of the popular PBMC68k dataset \citep{zheng2023guided} by assigning different proportions of synthetic treatment labels to four different clusters, derived with the Leiden algorithm \citep{Traag2019} with a resolution parameter of 0.4. By varying the parameter $a$ in \cref{eq: cluster_proportion}, we create more or less differentially abundant clusters for treatments 2 and 3, while we introduce no DA for clusters 1 and 4 (see \cref{fig: appendix_da_1}).

\textbf{Metrics.} Since we do not have a ground truth, our metrics assess the ability of detecting differential abundance and the absence thereof at different levels of the parameter $a$. All the metrics quantify the extent of DA continuously, meaning that their value should vary for different choices of $a$. We describe the metrics in \cref{sec: da_analysis}. For each of such metrics, we present:
\begin{itemize}
[topsep=0pt,itemsep=5pt,partopsep=0pt,parsep=0.0cm,leftmargin=20pt]
    \item Mean value for $a\geq0.3$: This suggests whether the metrics are promising in the presence of high DA. 
    \item The Spearman correlation between the metric and the value of $a$. Metrics are expected to calibrate with the level of DA introduced. Hence, we expect a high correlation between their values and $a$.
\end{itemize}
In these experiments, we evaluate the ratio predictions directly on the dataset used for model training, as a baseline like MELD does not enable generalization. 

\textbf{Experimental setup for scRatio.} We train scRatio for 100,000 steps with a learning rate of 1e-4. As a representation for gene expression, we use the top-$k$ Principal Components (PCs) for different values of $k$. We explore the hyperparameters on a grid presented in \cref{tab: hparams_da} and evaluate model optimality based on the Mean AUC metric (see \cref{tab: da_results}). 
\begin{table}[H]
\caption{The hyperparameter grid explored for the DA estimation experiment on the PBMC68k dataset.}
\label{tab: hparams_da}
\centering
\begin{tabular}{cc}
\hline
Hyperparameters         & Grid                    \\ \hline
$\lambda$               & 0, 0.01, 0.1, 0.25, 0.5 \\
$\sigma_{\mathrm{min}}$ & 0, 0.01, 0.1            \\
Batch size              & 256, 512,1024           \\
Top-k PCs           & 10, 20, 30, 40, 50      \\ \hline
\end{tabular}
\end{table}

\textbf{Baselines.} We consider the following baseline models developed specifically for the task. Although following different approaches, all the methods produce a score that can be interpreted as a $\log$-likelihood ratio between the densities conditioned on different treatment labels. 
\begin{itemize}
[topsep=0pt,itemsep=5pt,partopsep=5pt,parsep=0.1cm,leftmargin=20pt]
    \item \textbf{MrVI} \citep{boyeau2025deep}: MrVI is a disentangled Variational Autoencoder (VAE) compatible with DA analysis in single-cell data. It learns a low-dimensional latent representation in which biological and technical sources of variation are separated, while conditioning on experimental covariates. DA is assessed through variational posterior approximations of cell-type or neighborhood-specific abundances across conditions. We train the model across 200 epochs with default parameters. 
    
    \item \textbf{MELD} \citep{burkhardt2021quantifying}: MELD estimates differential abundance by smoothing condition labels over a kNN graph built from single-cell embeddings. It computes a continuous, cell-level density estimate for each condition via local averaging and derives DA scores by comparing these smoothed densities. We ran the model with default parameters. 
\end{itemize}

\textbf{Notes on advantages of scRatio over competing models.} We present some advantages of modeling likelihood ratios with scRatio compared to baseline models. 
\begin{itemize}
[topsep=0pt,itemsep=5pt,partopsep=5pt,parsep=0.1cm,leftmargin=20pt]
    \item \textbf{Exact likelihood estimation:} Using generative CNFs, our model enables numerically tractable likelihood-ratio estimation via ODE integration, while all other methods rely on heuristic approaches (MrVI) or neighborhood-based approximations (MELD). 
    \item \textbf{Flexibility:} Because it relies on conditional vector fields, scRatio can model flexible likelihood ratios based on the conditioning variables used during training. This allows tailoring the estimated likelihood ratio to a flexible research question, such as combining different hierarchies of variables or estimating batch-specific effects. 
    \item \textbf{Generalization.} Our deep generative model is a proxy for future generalization studies, while models like MELD produce graph-smoothed density ratio scores evaluated on the observed dataset, rather than likelihood-based generative estimates.  
\end{itemize}

\subsection{Batch Correction Experiment}\label{sec: batch_correction_experiment_appendix}
\textbf{Task.} In \cref{sec: batch_correction} we show that likelihood ratios reveal the presence of batch effect. Let $y$ and $b$ be cell type and batch labels and $\bm{X}=\{\vx^{(i)}, y^{(i)}, b^{(i)}\}_{i=1}^N$ a single-cell dataset of $N$ observations. Furthermore, let $\{p_t^\theta(\cdot \mid y, b)\}_{t\in[0,1]}$ and $\{p_t^\theta(\cdot \mid y)\}_{t\in[0,1]}$ be doubly- and singly-conditional time-resolved densities generated by the neural vector fields $\{u_t^\theta(\cdot \mid y, b)\}_{t\in[0,1]}$ and $\{u_t^\theta(\cdot \mid y, \varnothing)\}_{t\in[0,1]}$ \citep{ho2021classifierfree}. We use \cref{prop:log-ratio-ode} to estimate the following ratio:
\begin{equation}\label{eq: batch_effect_ratio_app}
    \log r_1^{\theta}\!\left(\vx^{(i)} \mid (y^{(i)}, b^{(i)}), y^{(i)}\right)
    =
    \log \frac{p_1^{\theta}(\vx^{(i)} \mid y^{(i)}, b^{(i)})}
    {p_1^{\theta}(\vx^{(i)} \mid y^{(i)})} \: .
\end{equation}
We distinguish two settings:
\begin{itemize}
[topsep=0pt,itemsep=5pt,partopsep=5pt,parsep=0.1cm,leftmargin=20pt]
    \item \textbf{Presence of batch effect:} The ratio in \cref{eq: batch_effect_ratio_app} is nonzero, as adding the batch variable affects the conditional density around the data points.
    \item \textbf{Absence of batch effect:} The ratio in \cref{eq: batch_effect_ratio_app} is approximately $0$, as adding information on the batch does not increase the likelihood of the data. In other words, the batch covariate does not add information beyond the annotated cell type.
\end{itemize}
In \cref{sec: batch_correction} we propose using this notion to evaluate the effectiveness of batch correction. Given an arbitrary batch correction function $f(\vx, y, b)$, we evaluate the following ratios:
$$
\log r_1^{\theta}\!\left(\vx^{(i)} \mid (y^{(i)}, b^{(i)}), y^{(i)}\right),
\quad
\log r_1^{\theta}\!\left(f(\vx^{(i)}, y^{(i)}, b^{(i)}) \mid (y^{(i)}, b^{(i)}), y^{(i)}\right) \: .
$$
If $\log r_1^{\theta}\!\left(f(\vx^{(i)}, y^{(i)}, b^{(i)}) \mid (y^{(i)}, b^{(i)}), y^{(i)}\right)$ drops considerably towards $0$ compared to $\log r_1^{\theta}\!\left(\vx^{(i)} \mid (y^{(i)}, b^{(i)}), y^{(i)}\right)$, a substantial extent of batch information has been removed.

\textbf{Datasets.} For this analysis, we choose two datasets holding both technical and biological annotations:
\begin{itemize}
[topsep=0pt,itemsep=0pt,partopsep=0pt,parsep=0.1cm,leftmargin=20pt]
    \item NeurIPS \citep{luecken2021sandbox}: 90{,}261 bone marrow cells from 12 healthy donors.
    \item \textit{C. Elegans} \citep{packer2019lineage}: 89{,}701 cells across 7 sources.
\end{itemize}
As a preprocessing step, we normalize the counts to sum to $10{,}000$, $\log$-transform them, and perform dimensionality reduction by taking the first 50 PCs of the data \citep{heumos2023best}.

\textbf{Batch correction tool.} As a batch correction tool we use scVI \citep{lopez2018deep} with default hyperparameters.

\textbf{Evaluation.} Since this is a qualitative experiment, we limit ourselves to the empirical evaluation of how the likelihood ratios change when training scRatio on cellular representations before and after batch correction. In the former case, we use the top-50 PCs as a representation. For the latter, we consider a batch-corrected 50-dimensional latent space predicted by scVI.

\textbf{Hyperparameters.} We focus on the flow-matching formulation with deterministic paths between noise and data samples and $\sigma_{\mathrm{min}}=0.01$. Moreover, we train a doubly-conditional model, a singly-conditional model, and an unconditional counterpart by randomly replacing conditioning variables $y$ and $b$ with a null token $\varnothing$ with probability $0.2$. We train the model for 100{,}000 steps and take the absolute value of the predicted ratios, as their distance from zero semantically reflects the presence of a batch effect (see \cref{fig:batch_correction_eval} and \cref{fig:batch_correction_appendix_plot}).

\subsection{Drug Combination Experiment}\label{sec: drug_comb_experiment}
In this experiment, we demonstrate how scRatio can be used to score combinations of drugs. More specifically, we assess whether treating cells with two perturbations induces a state shift compared to treatment with a single perturbation.

\textbf{Task.} We propose a formulation analogous to the batch correction task (see \cref{sec: batch_correction} and \cref{sec: batch_correction_experiment_appendix}), but instead of considering batch and cell type labels, we consider combinations of drug labels. In analogy with \cref{eq: batch_effect_ratio_app}, we define two drug conditions, denoted by $d_1$ and $d_2$, and estimate the following ratio:
\begin{equation}\label{eq: drug_combo_ratio_app}
    \log r_1^{\theta}\!\left(\vx^{(i)} \mid (d_1^{(i)}, d_2^{(i)}), d_1^{(i)}\right)
    =
    \log \frac{p_t^{\theta}\!\left(\vx^{(i)} \mid d_1^{(i)}, d_2^{(i)}\right)}
    {p_t^{\theta}\!\left(\vx^{(i)} \mid d_1^{(i)}\right)} \: .
\end{equation}
In other words, we assess whether a cell is more likely under a distribution conditioned on both drugs than under a distribution conditioned only on the first. From an application perspective, this corresponds to evaluating the distributional shift induced by a drug combination and enables hypothesis generation regarding synergistic effects and chemical interactions. A strong combinatorial effect corresponds to a large absolute $\log$-likelihood ratio, whereas its absence corresponds to ratios concentrated around zero.

\textbf{Dataset.} For this experiment, we use the ComboSciPlex dataset \citep{srivatsan2020massively}, which contains  63,378 cells treated with combinations of drugs. We investigate whether combining Alvespimycin, Dacinostat, Dasatinib, Givinostat, Panobinostat, or SRT2104 with $>20$ additional drugs induces a significant state shift compared to treatment with each drug alone.

\textbf{Experimental setup.} We estimate \cref{eq: drug_combo_ratio_app} using scRatio. To approximate the denominator, we use a doubly conditional model combining a drug label with a control label, which conceptually corresponds to the absence of a second perturbation. We train scRatio for 200{,}000 steps in its deterministic variant, using a Gaussian distribution around the data ($\lambda=0$ and $\sigma_{\mathrm{min}}=0.001$), with a batch size of 512. As a data representation, we use 5-dimensional autoencoder embeddings.

\textbf{Evaluation.} As is common for real-world datasets, no ground-truth likelihood ratios are available. Since we use this experiment as a qualitative case study, we provide two complementary analyses:
\begin{itemize}
[topsep=0pt,itemsep=0pt,partopsep=0pt,parsep=0.1cm,leftmargin=20pt]
    \item \textbf{Qualitative analysis:} For reported active perturbation combinations, we show that the estimated ratios attain large absolute values, whereas for inactive double perturbations, the ratios remain close to zero. This qualitative validation is available in \cref{sec: app_combo_results}.
    \item \textbf{Correlation with classification probability:} We show that the estimated absolute $\log$ ratios are well calibrated with the binary classification absolute $\log$-odds of a deep classifier trained to discriminate between singly and doubly perturbed cells.
\end{itemize}

\subsection{Patient-specific Treatment Response Experiment}\label{sec: patient_response_experiment}

\textbf{Task.} 
In this experiment, we demonstrate how scRatio can be used to quantify patient-specific responses to perturbations. More specifically, we assess whether cytokine stimulation induces a donor-dependent state shift compared to the corresponding control condition. 

In analogy to the drug combination experiment (see \cref{sec: drug_comb_experiment}), we formulate a conditional likelihood ratio comparing treated and control cells within the same donor. Let $s$ denote a donor (sample) label and $d$ a treatment, where $d^k$ indexes cytokines and $d^c$ represents the control condition. For a perturbed cell $\vx^{(i)}$ from donor $s^{(i)}$ treated with cytokine $d^{(i)}$, we estimate
\begin{equation}\label{eq: patient_ratio_app}
    \log r_1^{\theta}\!\left(\vx^{(i)} \mid (s^{(i)}, d^{(i)}), (s^{(i)}, d^{c})\right)
    =
    \log \frac{p_1^{\theta}\!\left(\vx^{(i)} \mid s^{(i)}, d^{(i)}\right)}
    {p_1^{\theta}\!\left(\vx^{(i)} \mid s^{(i)}, d^{c}\right)} \: .
\end{equation}
In other words, we evaluate whether a treated cell is more likely under the donor-specific treatment distribution than under the donor-specific control distribution. A strong donor-specific treatment effect corresponds to large absolute $\log$-likelihood ratios, whereas the absence of a differential response results in ratios concentrated around zero.

\textbf{Dataset.} 
We use a large-scale PBMC perturbation dataset comprising approximately 10 million cells from 12 donors \citep{oesinghaus2025single}. For each donor, cells are profiled in a control condition and under stimulation with 90 different cytokines. The dataset includes previously reported donor groupings with cytokine-specific differential responses, enabling qualitative validation of patient-specific effects.

\textbf{Experimental setup.} 
We train scRatio to model conditional distributions over cell states given donor and treatment labels. As in previous experiments, we use the deterministic variant of scRatio with a Gaussian distribution around the data ($\lambda = 0$ and $\sigma_{\mathrm{min}} = 0.001$). Training is performed for 100{,}000 steps with a batch size of 512. Cells are represented using low-dimensional autoencoder embeddings to capture their transcriptional state. 

To estimate \cref{eq: patient_ratio_app}, we condition the numerator on both donor and cytokine labels, and the denominator on the same donor paired with the control label. This design ensures that estimated likelihood ratios capture treatment-induced shifts while controlling for inter-donor variability.

\textbf{Evaluation.} 
As no ground-truth likelihood ratios are available, we conduct a qualitative evaluation guided by biological findings reported in \citet{oesinghaus2025single}. Specifically, we analyze cytokines that were shown to induce group-specific responses (e.g., IL-10 and IFN-omega), as well as cytokines with weak or similar effects across donor groups (e.g., APRIL). For each donor-treatment pair, we examine the distribution of estimated $\log$-likelihood ratios for treated cells relative to controls. Consistency between reported biological differences and the magnitude and separation of likelihood ratio distributions serves as validation that scRatio captures meaningful patient-specific treatment effects.

\newpage
\section{Algorithms}

We provide an algorithmic description of the scRatio inference procedure here. For simplicity, we consider trajectories simulated under the numerator field. Note that, when evaluating functions on collections of objects, we override the notation to emphasize the vectorizability of the operation and assume the returned object is the same collection mapped with the used function.

\begin{algorithm}
\caption{Inference with scRatio on population with label $\vy$.}
\label{alg:inference}
\begin{algorithmic}[1]
\STATE {\bfseries Input:} Trained Neural Velocity Field $u_t^\theta(\cdot \mid Y)$ and Score $s_t^\psi(\cdot \mid Y)$. Two realizations $\vy$ and $\vy^\prime$ with $\vy\neq \vy^\prime$ of conditioning random variable $Y$. Set of data-points $\boldsymbol{X}_1:=\{(\vx_1^{(i)}, \vy)\}_{i=1}^N$, with $\vx_1^{(i)}\sim q(\cdot\mid Y=\vy)$.
\STATE {\bfseries Output:} Estimated $\log$-likelihood ratio $\log\frac{p^\theta(\cdot \mid Y= \vy)}{p^\theta(\cdot \mid Y= \vy^\prime)}$

\STATE $\frac{\mathrm{d}}{\mathrm{d} t}r_t^\theta(\vx_t^{(i)} \mid \vy, \vy^\prime) \leftarrow \nabla_{\vx_t^{(i)}} \cdot \bigl(u_t^\theta(\vx_t^{(i)}\mid Y=\vy^\prime) - u_t^\theta(\vx_t^{(i)}\mid Y=\vy)\bigr) + $\\
$\quad + \bigl(u_t^\theta(\vx_t^{(i)}\mid Y=\vy^\prime) - u_t^\theta(\vx_t^{(i)}\mid Y=\vy)\bigr)^\top \, s_t^\psi(\vx_t^{(i)}\mid Y=\vy^\prime)$
\hfill // Define ratio dynamics to estimate \cref{eq: dynamical_ratio_prop}.
\STATE $\hat{b}_t(\vx_t^{(i)} \mid \vy, \vy^\prime)\leftarrow [u_t^\theta(\vx_t^{(i)}\mid Y=\vy), \frac{\mathrm{d}}{\mathrm{d} t}r_t^\theta(\vx_t^{(i)} \mid \vy, \vy^\prime)]$ \hfill // Define augmented dynamics.

\STATE $[\hat{\boldsymbol{X}}_0, \boldsymbol{r}_0] \leftarrow \text{OdeSolve($[\boldsymbol{X}_1, \boldsymbol{0}], \hat{b}_t(\cdot \mid \vy, \vy^\prime), t_0=1, t_1=0$)}$ \hfill // Solve ODE backwards in time.
\STATE \textbf{return} $-\boldsymbol{r}_0$ \hfill// Return $\log$ ratio.
\end{algorithmic}
\end{algorithm}

\newpage
\section{Additional Results}
\subsection{Differential Abundance Experiments}\label{sec: differential_abundance_app}
A general depiction of the semi-synthetic PBMC68k datasets can be found in \cref{fig: appendix_da_1}. 
\begin{figure}[H]
    \centering
\includegraphics[width=0.8\linewidth]{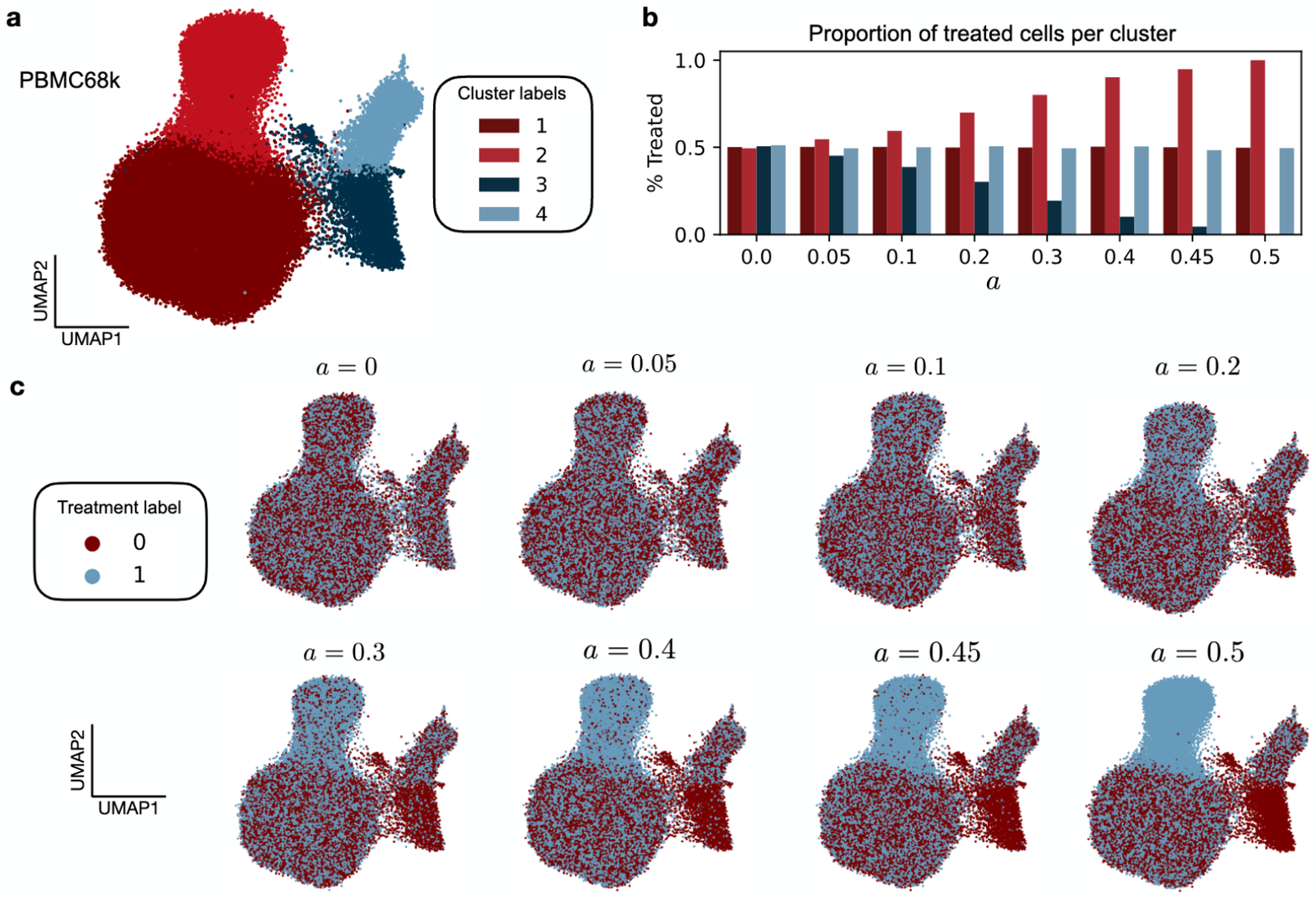}
    \caption{\textbf{a.} UMAP plot of the PBMC68K dataset colored by cluster labels. The clusters are assigned to each cell through the Leiden algorithm \citep{Traag2019}. \textbf{b.} The proportion of treated cells per cluster for different values of $a$ (see \cref{eq: cluster_proportion} for a description of $a$). \textbf{c.} UMAP plot colored by the treatment label sampled for each cluster using different values of $a$.}
    \label{fig: appendix_da_1}
\end{figure}

\newpage
\subsection{Additional Results on Gaussian Simulation Experiments}\label{sec: additional_results_mse}
We report the Mean Squared Error (MSE) between the true and estimated $\log$-ratios across different dimensions, averaged over 5 runs. Results in \cref{fig: mse_full_plot_gaussians} show that scRatio outperforms all baselines for both locations 1 and 2 across all dimensions, except for location 1 at 2 dimensions, where TSM and the naive approach achieve comparable performance. Different scRatio schedules yield similar performance across dimensions.

\begin{figure}[H]
    \centering
\includegraphics[width=0.8\linewidth]{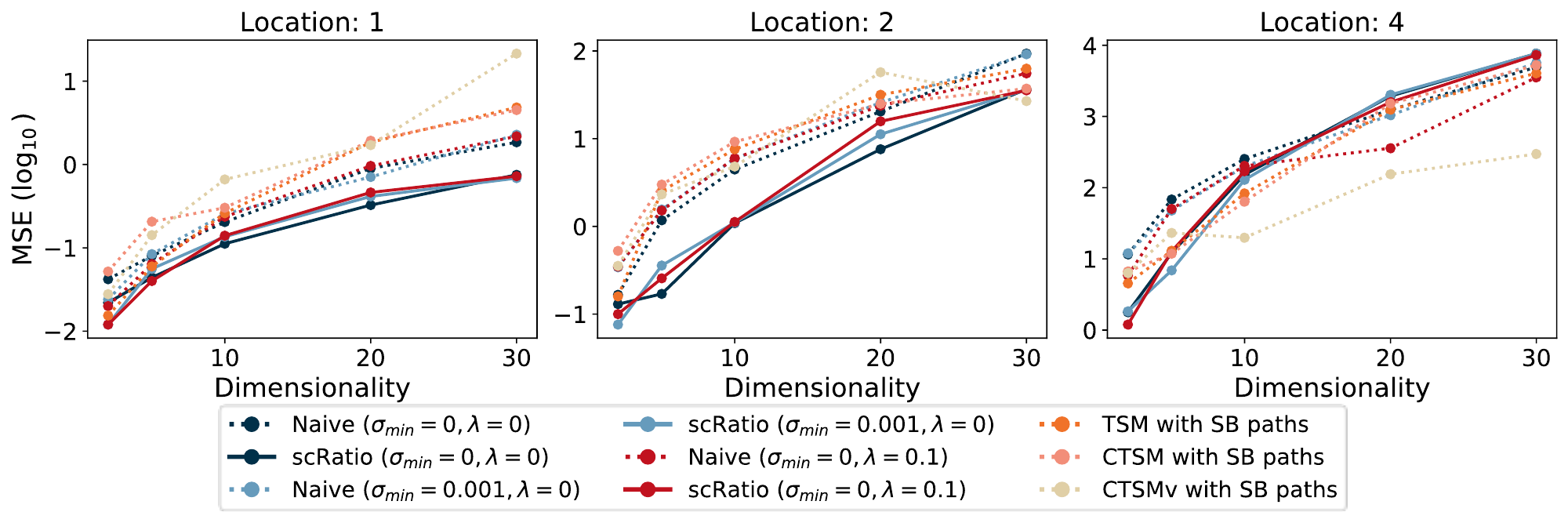}
    \caption{Mean Squared Error (MSE) of different methods across varying data dimensionalities and values of $s$ (mean shifts from the origin). TSM and CTSM denote baseline models, and SB paths indicate Schrödinger Bridge probability paths.}\label{fig: mse_full_plot_gaussians}
\end{figure}

For location 4, however, we observe a different behavior, with some baselines outperforming scRatio. This can be attributed to the limited overlap between the distributions for which the ratio is estimated. Specifically, baselines such as TSM, CTSM, and CTSMv learn probability paths directly between these distributions, whereas scRatio learns probability paths separately from the source to each target distribution. The effect of distribution overlap as a function of the distance between Gaussian means is illustrated in \cref{fig: overlap_plot}, where we also add a classifier baseline similar to \cref{sec: mi_estimation}.

\begin{figure}[H]
    \centering
\includegraphics[width=0.8\linewidth]{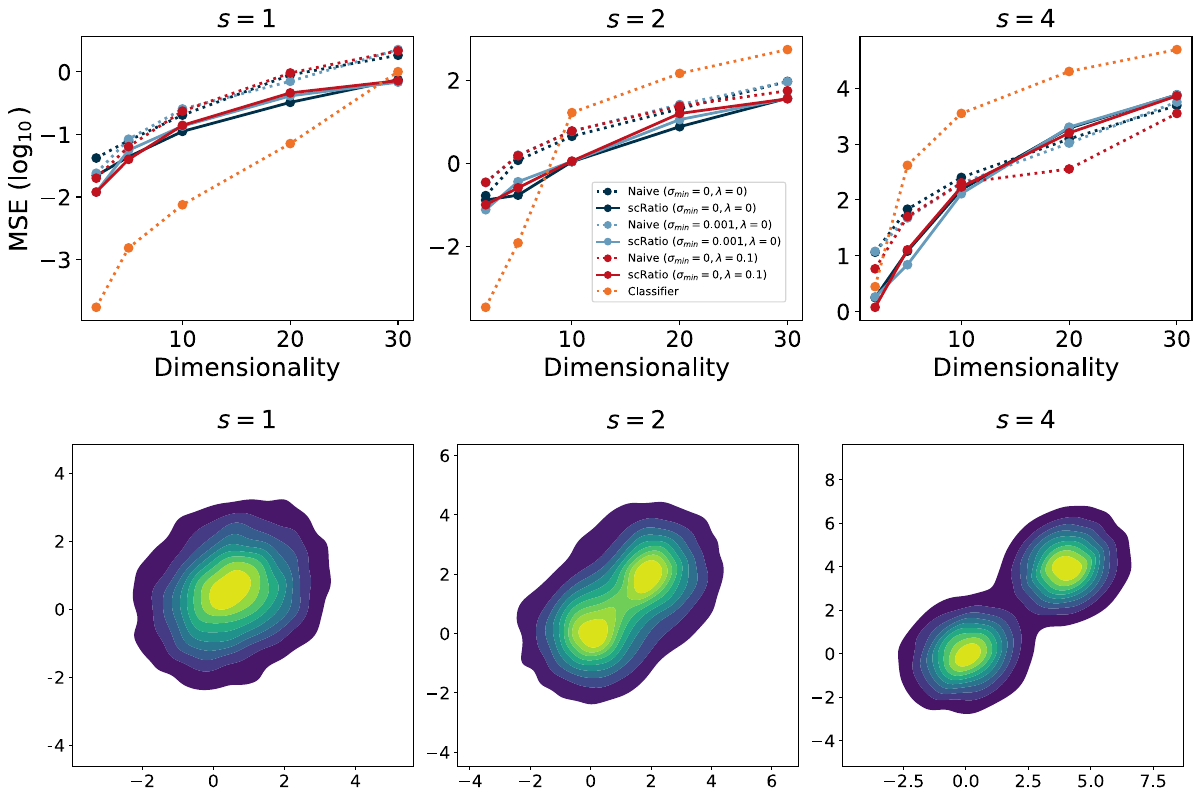}\caption{\textbf{a.} Comparison between scRatio and a standard classifier baseline on the task of predicting likelihood ratios between Gaussians (see \cref{sec: simulated_gaussians}). Performance is evaluated for different numbers of features (x-axis) as the MSE between ground truth and predicted ratios. \textbf{b.} Overlap between 2D Gaussian distributions as a function of the distance between their means. In all plots, one of the distributions is fixed as a standard Gaussian.}\label{fig: overlap_plot}
\end{figure}

\subsection{Performance as a Function of Sample Complexity on Gaussian Simulations}\label{sec: sample_complexity}
\begin{figure}[H]
    \centering
    \includegraphics[width=0.60\linewidth]{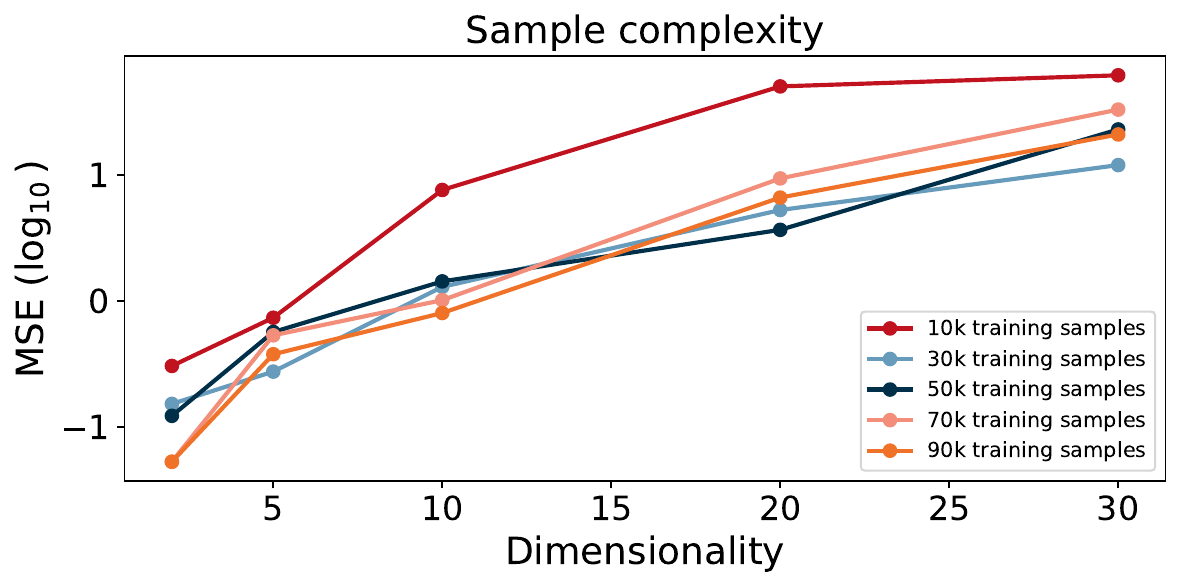}
    \caption{scRatio's MSE results for different levels of sample complexity on the Gaussian log-likelihood ratio estimation task. From \cref{sec: simulated_gaussians} we use the harder setting with $s=2$ and $d\in \{2, 5, 10, 20, 30\}$ \citep{yu2025density}. For the best hyperparameter configuration, we retrain the model with different numbers of training points and report the performance on the test set.}
\end{figure}

\subsection{Effect of Reparameterization on Gaussian Simulations}
\begin{figure}[H]\label{fig: reparam_effect_empirical}
    \centering
    \includegraphics[width=0.60\linewidth]{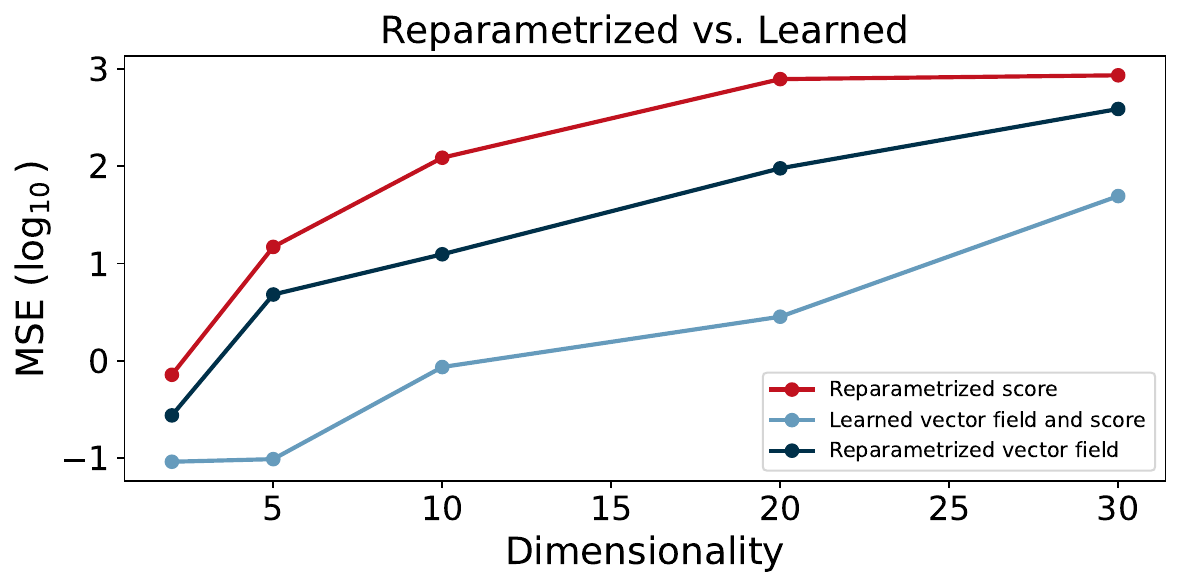}
    \caption{scRatio's MSE results for neural parameterization and closed-form reparameterization of the scores on the Gaussian log-likelihood ratio estimation task. From \cref{sec: simulated_gaussians} we use the harder setting with $s=2$ and $d\in \{2, 5, 10, 20, 30\}$ \citep{yu2025density}. For the best hyperparameter configuration in \cref{sec: simulated_gaussians}, we compare the performance of the model across dimensions between closed-form reparameterization and neural parameterization of the score and vector field in \cref{prop:log-ratio-ode}.}
\end{figure}

\newpage

\subsection{MSE vs Ratios' Absolute Value on Gaussian simulations}
\begin{figure}[H]\label{fig: mse_magnitude_plot}
    \centering
\includegraphics[width=\linewidth]{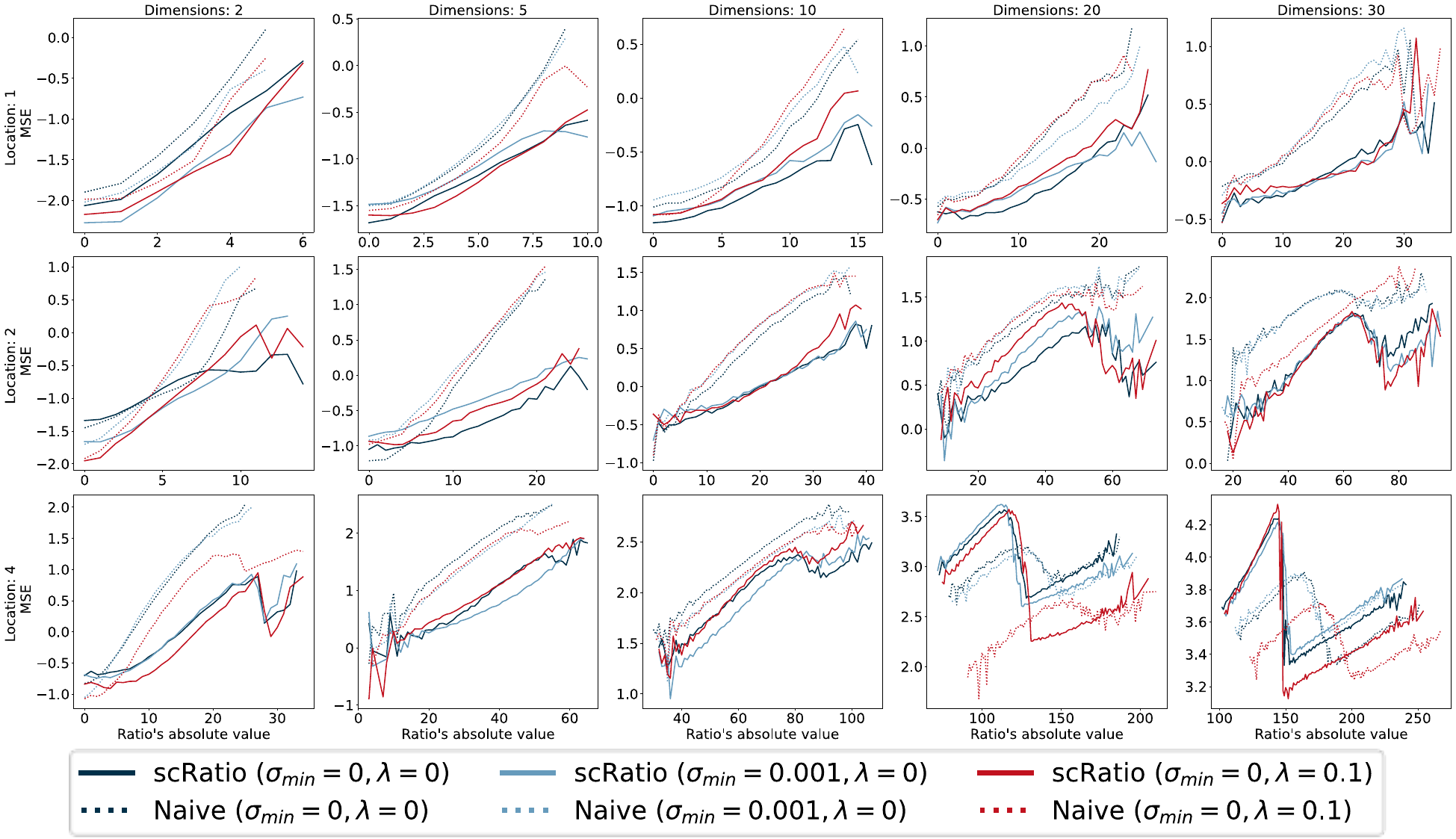}
    \caption{Mean Squared Error (MSE) as a function of the absolute value of the log-ratio, shown for different locations and dimensionalities.}
\end{figure}

\subsection{The Relationship Between Performance and Stochasticity on Gaussian Simulations}
\begin{figure}[H]
    \centering
    \includegraphics[width=0.8\linewidth]{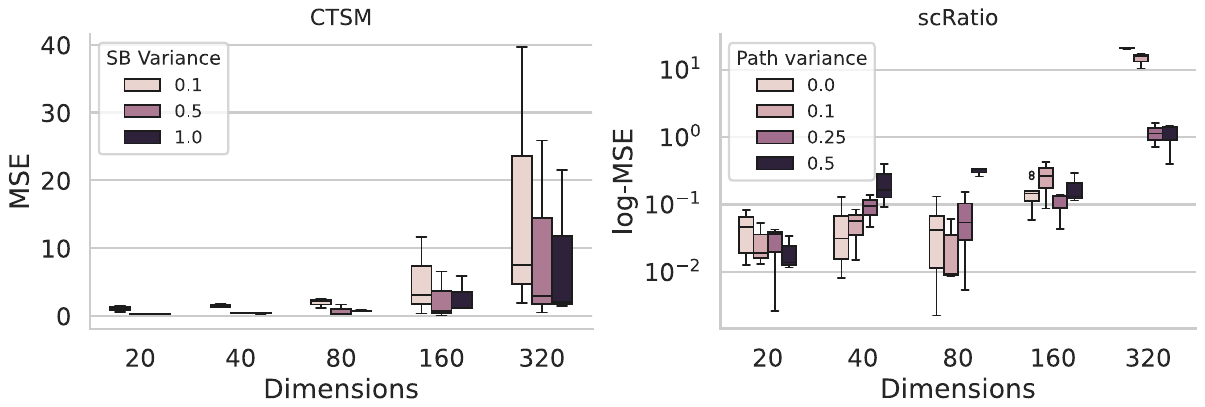}
    \caption{Test MSE as a function of dimensions and path variance for CTSM and scRatio across hyperparameter configurations. Boxplots represent the distribution of MSE (linear scale for CTSM, log-scale for scRatio) as a function of input dimensions. Boxes for both models are colored by the level of path stochasticity in the interpolants (Schrödinger Bridge and flow matching path variance, respectively).}
\end{figure}

\subsection{Effect of Score and Velocity Approximation Quality on Gaussian Simulations}\label{sec:approx_effect}
\textbf{The impact of suboptimal neural approximation.} To assess the impact of the neural approximation quality on the log-likelihood ratio estimation, we conducted an ablation experiment on the velocity field $v_t^\theta$ and score $s_t^\psi$ networks. For this experiment, while evaluating over all the previously tested dimensions, we fixed the location to $s=2$. To ensure compatible intermediate representations, we set both the state and the condition encoders to the identity function. Let $\mathcal J:=\{1000, 5000, 10000, 50000, 100000\}$ denote the set of training step checkpoints. Then, for each number of training steps $J\in\mathcal J$, we trained a pair of score and velocity fields, denoted by $(v_t^{\theta, J}, s_t^{\psi, J})$. We then estimated the ratio for $(J, J^\prime)\in\mathcal J\times \mathcal J$, by combining the neural networks $(v_t^{\theta, J}, s_t^{\psi, J^\prime})$ and reported the Mean Squared Error (MSE) against the ground-truth ratio. This cross-evaluation setup allows us to pinpoint the impact of the degraded approximation of $v_t^\theta$ and $s_t^\psi$ both jointly and individually. 

Our results show that poorer neural approximations reduce ratio accuracy. Performance is generally more sensitive to the velocity field than the score, suggesting the latter could be trained for fewer steps. In higher dimensions, the score accuracy becomes more relevant.

\begin{figure}[H] 
    \centering
    \includegraphics[width=1.0\linewidth,trim=0cm 26.2cm 0cm 0cm, clip]{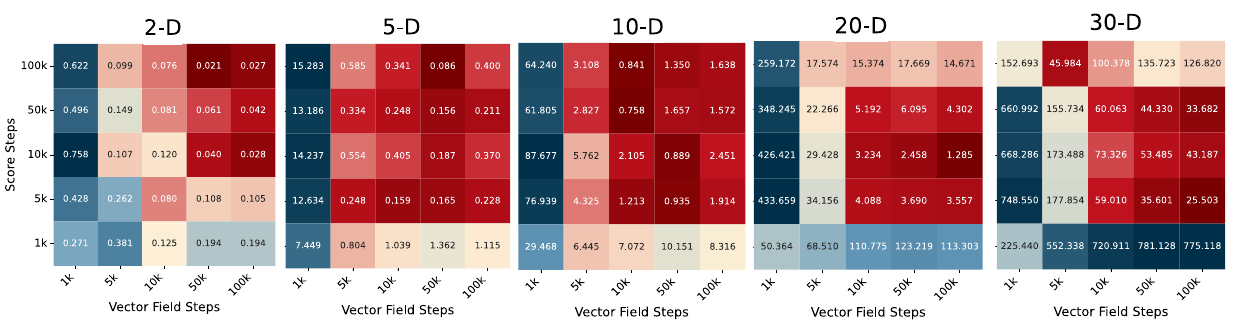}
    \caption{
    Heatmap of Mean Squared Error per number of training steps for velocity $u_t^\theta$ and score $s_t^\psi$ fields. On the $x$-axis the number of steps used to train the velocity field $u_t^\theta$ is presented, the $y$-axes shows the number of steps for the score field $s_t^\psi$, while the hue value encodes the MSE attained by the combination of the two.
    }
    \label{fig: appendix_approximation_quality}
\end{figure}

\newpage
\subsection{Additional Results on Batch Correction Evaluation}
\begin{center}
  \includegraphics[width=0.9\textwidth]{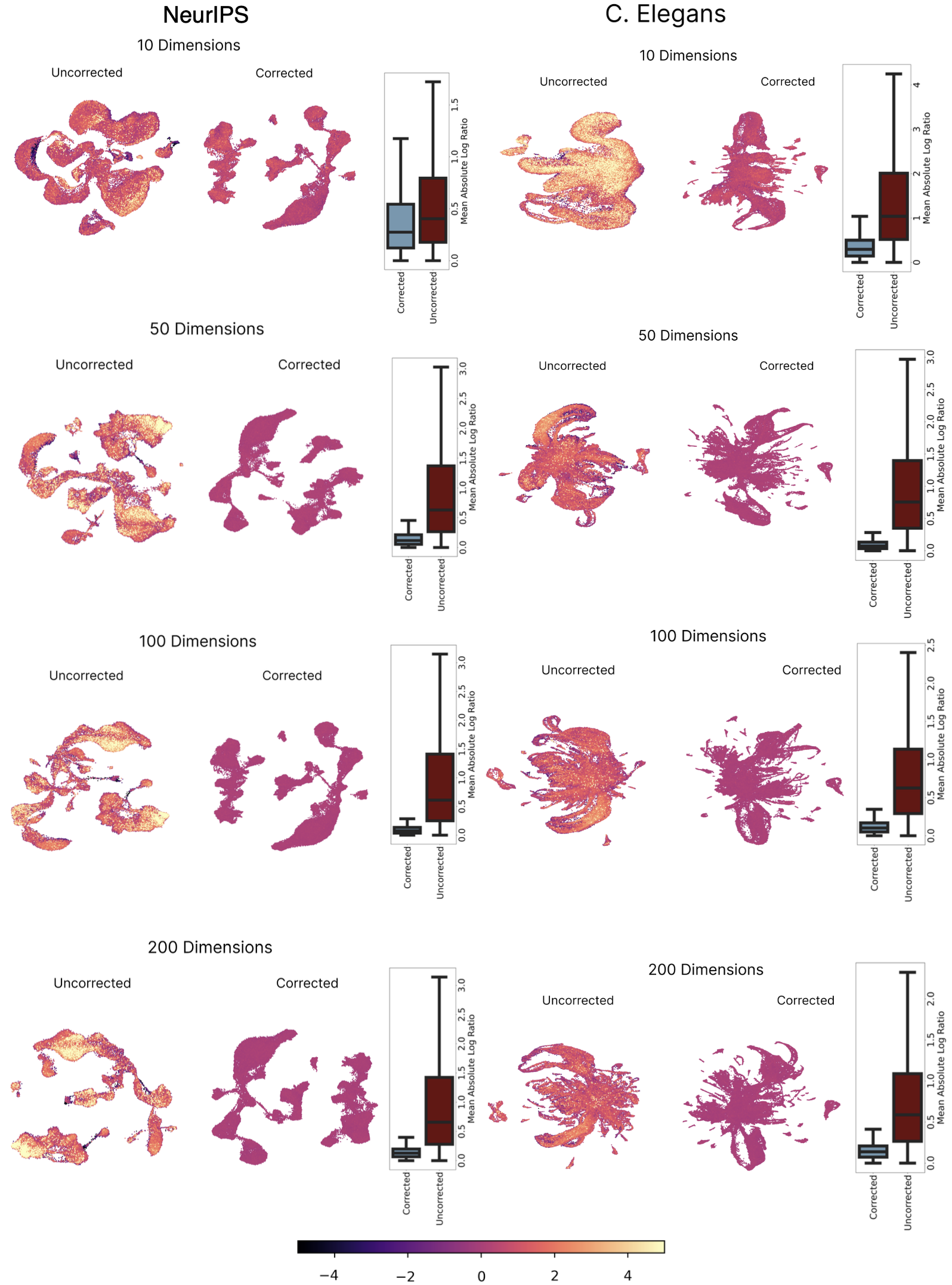}
  \captionof{figure}{UMAP projections for the two datasets across different dimensionalities ($10$, $50$, $100$, and $200$). Each cell is colored by the estimated $\log$-likelihood ratio, computed as described in \cref{sec: batch_correction_experiment_appendix}. Beside each UMAP, the corresponding box plot shows the distribution of the mean absolute log-ratio per condition combination.}
  \label{fig:batch_correction_appendix_plot}
\end{center}

\newpage
\subsection{Additional Results on Combinatorial Drug Estimation}\label{sec: app_combo_results}

We present additional plots in \cref{fig: umaps_boxplot} to illustrate how the estimated $\log$-ratio changes as the combinatorial effect increases. Starting with the first UMAP, which has no combinatorial effect, and progressing to those with increasingly pronounced effects, the box plots clearly show the corresponding rise in the estimated $\log$-ratio.

\begin{figure}[H]
    \centering
\includegraphics[width=0.6\linewidth]{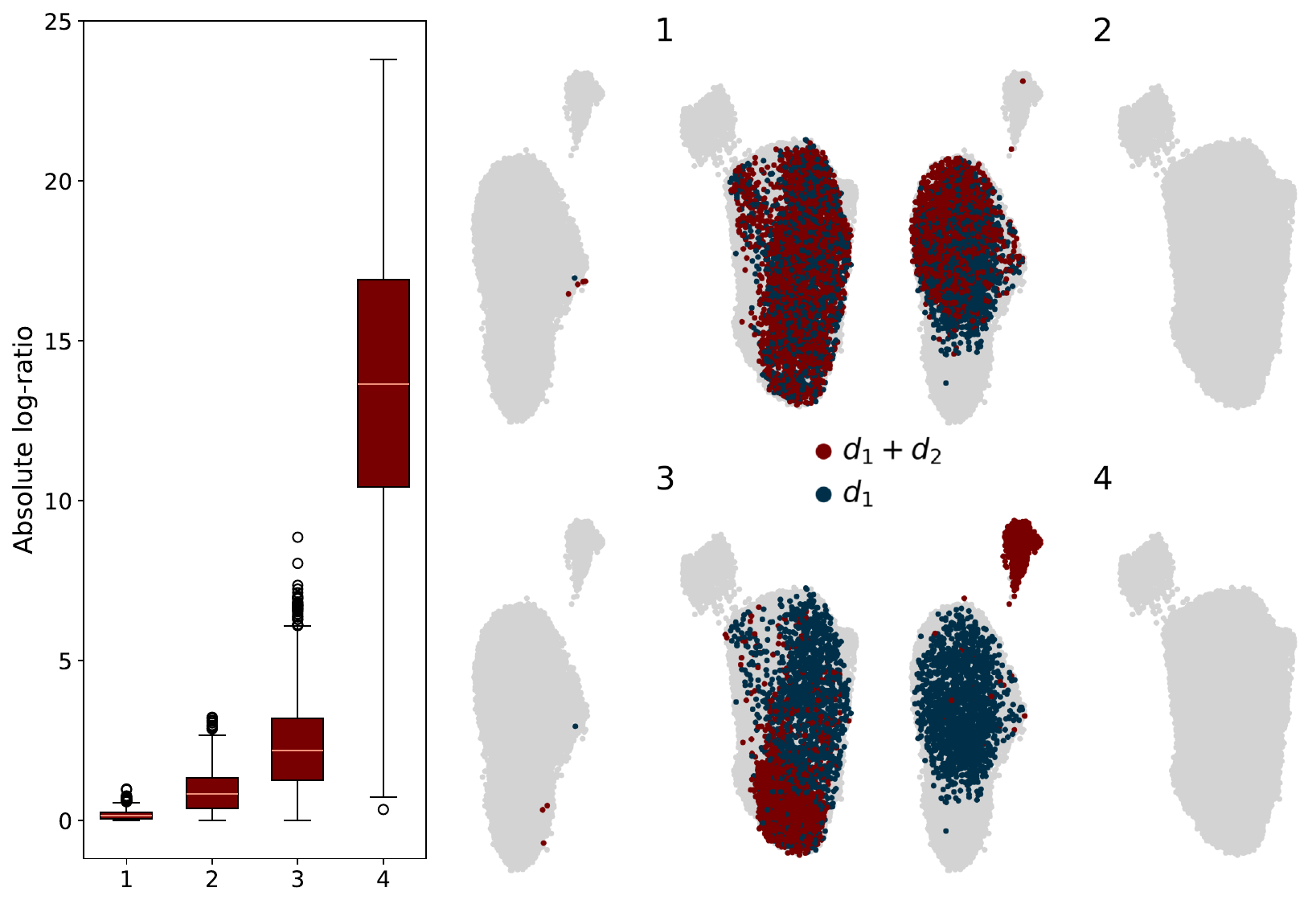}
    \caption{Right: Four UMAP plots showing the gradual increase of combinatorial effects across several single perturbations and combinations thereof. Left: Corresponding box plots for each UMAP, displaying the absolute values of the estimated log-likelihood ratios.}\label{fig: umaps_boxplot}
\end{figure}

\newpage

\subsection{Ablation Results on the Gaussian Simulation Experiments}
\begin{table}[H]
\centering
\caption{Sweep results on Gaussian simulation experiments for different noise schedules. $\lambda$ controls the stochasticity along conditional probability paths from noise to data, while $\sigma_{\mathrm{min}}$ is the variance of Gaussian noise around data points. The $\alpha_t$ parameter is kept fixed to $\alpha_t=t$, with $t\in[0,1]$. MSE results for different shifts from the origin ($s$) are averaged across three training replicates. Best results are highlighted per dimensionality.}
\label{tab:my-table}
\resizebox{0.4\textwidth}{!}{%
\begin{tabular}{|c|c|c|c|c|}
\hline
$d$                  & $\lambda$           & $\sigma_{\mathrm{min}}$ & MSE ($s=1.0$) & MSE ($s=2.0$)  \\ \hline
\multirow{10}{*}{2}  & \multirow{5}{*}{0.0} & 0.0                     & \textbf{0.01} & \textbf{0.05}  \\
                     &                      & 0.0001                  & \textbf{0.01} & \textbf{0.05}  \\
                     &                      & 0.001                   & \textbf{0.01} & 0.13           \\
                     &                      & 0.01                    & \textbf{0.01} & \textbf{0.05}  \\
                     &                      & 0.1                     & 0.02          & 0.17           \\
                     & 0.10                 & 0.0                     & 0.02          & 0.07           \\
                     & 0.25                 & 0.0                     & 0.02          & 0.16           \\
                     & 0.50                 & 0.0                     & 0.02          & 0.08           \\
                     & 0.75                 & 0.0                     & 0.02          & 0.10           \\
                     & 1.00                 & 0.0                     & \textbf{0.01} & 0.19           \\ \hline
\multirow{10}{*}{5}  & \multirow{5}{*}{0.0} & 0.0                     & \textbf{0.04} & 0.38           \\
                     &                      & 0.0001                  & 0.06          & 0.19           \\
                     &                      & 0.001                   & 0.08          & 0.29           \\
                     &                      & 0.01                    & 0.05          & 0.39           \\
                     &                      & 0.1                     & 0.05          & \textbf{0.17}  \\
                     & 0.10                 & 0.0                     & 0.05          & 0.26           \\
                     & 0.25                 & 0.0                     & 0.06          & 0.41           \\
                     & 0.50                 & 0.0                     & 0.06          & 0.59           \\
                     & 0.75                 & 0.0                     & 0.10          & 0.97           \\
                     & 1.00                 & 0.0                     & 0.20          & 0.94           \\ \hline
\multirow{10}{*}{10} & \multirow{5}{*}{0.0} & 0.0                     & 0.18          & 0.81           \\
                     &                      & 0.0001                  & 0.17          & 0.88           \\
                     &                      & 0.001                   & 0.12          & \textbf{0.67}  \\
                     &                      & 0.01                    & 0.15          & 0.87           \\
                     &                      & 0.1                     & \textbf{0.11} & 1.32           \\
                     & 0.10                 & 0.0                     & 0.13          & 1.42           \\
                     & 0.25                 & 0.0                     & 0.17          & 1.33           \\
                     & 0.50                 & 0.0                     & 0.18          & 2.73           \\
                     & 0.75                 & 0.0                     & 0.12          & 6.93           \\
                     & 1.00                 & 0.0                     & 0.21          & 6.79           \\ \hline
\multirow{10}{*}{20} & \multirow{5}{*}{0.0} & 0.0                     & 0.64          & \textbf{6.64}  \\
                     &                      & 0.0001                  & \textbf{0.36} & 10.32          \\
                     &                      & 0.001                   & 0.39          & 11.17          \\
                     &                      & 0.01                    & 0.57          & 10.89          \\
                     &                      & 0.1                     & 0.74          & 14.66          \\
                     & 0.10                 & 0.0                     & 0.42          & 12.61          \\
                     & 0.25                 & 0.0                     & 0.49          & 17.75          \\
                     & 0.50                 & 0.0                     & 1.17          & 39.93          \\
                     & 0.75                 & 0.0                     & 0.72          & 43.47          \\
                     & 1.00                 & 0.0                     & 0.63          & 48.24          \\ \hline
\multirow{10}{*}{30} & \multirow{5}{*}{0.0} & 0.0                     & \textbf{0.69} & \textbf{21.03} \\
                     &                      & 0.0001                  & 0.86          & 44.19          \\
                     &                      & 0.001                   & 0.77          & 32.69          \\
                     &                      & 0.01                    & 0.79          & 41.02          \\
                     &                      & 0.1                     & 0.86          & 55.97          \\
                     & 0.10                 & 0.0                     & 0.86          & 38.67          \\
                     & 0.25                 & 0.0                     & 0.82          & 81.42          \\
                     & 0.50                 & 0.0                     & 1.59          & 121.23         \\
                     & 0.75                 & 0.0                     & 2.09          & 82.06          \\
                     & 1.00                 & 0.0                     & 1.41          & 82.14          \\ \hline
\end{tabular}}
\end{table}

\newpage
\begin{figure}[H]
    \centering
    \includegraphics[width=0.8\linewidth]{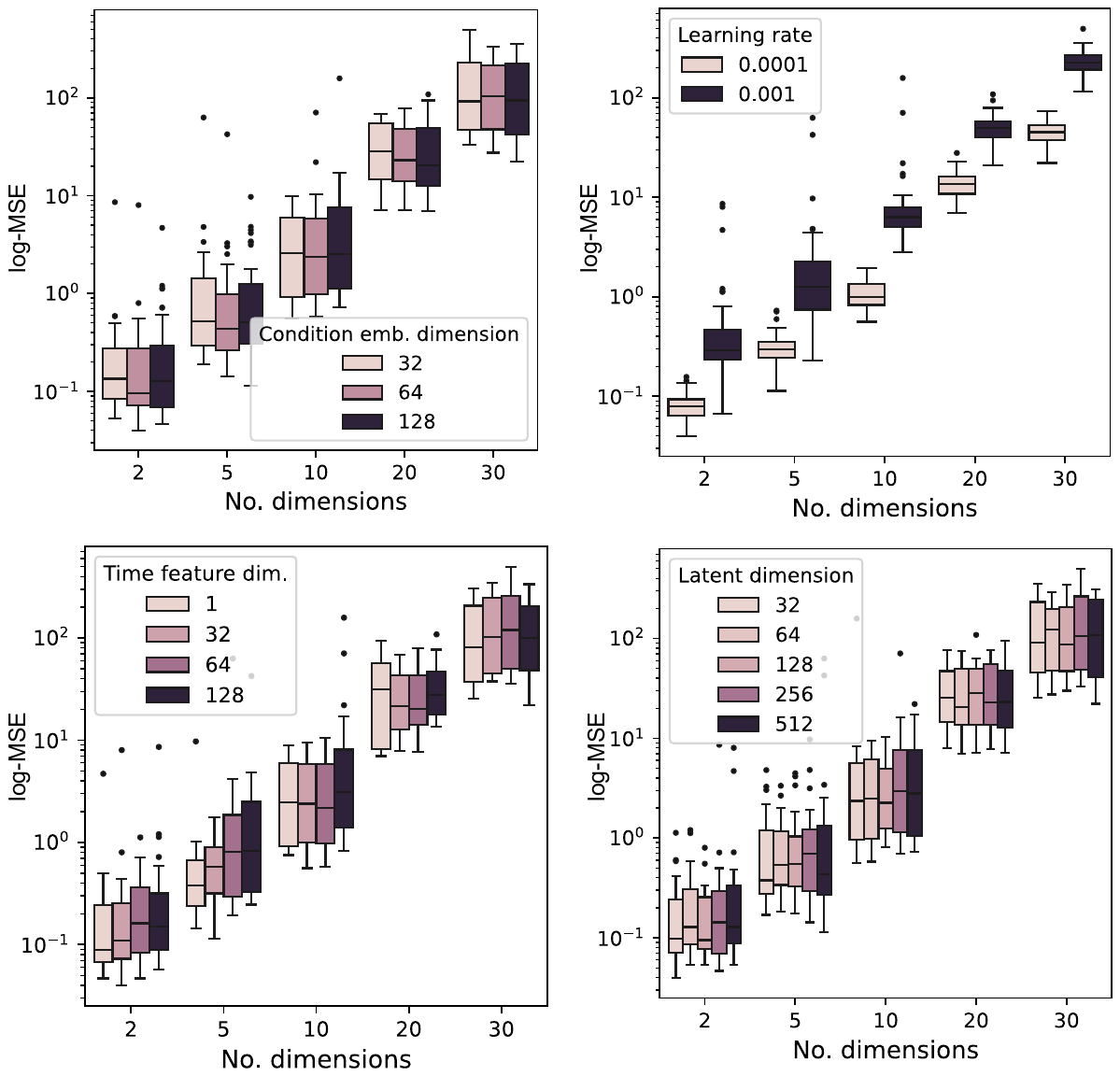}
    \caption{Performance as a function of number of features for different model hyperparameters (number of condition embedding dimensions, learning rate, time feature size, number of latent dimensions). Evaluations are conducted on Gaussian data from \cref{sec: simulated_gaussians} with $s=2$. The performance quality is measured through the MSE between ground-truth and estimated likelihood ratios. }
\end{figure}


\end{document}